\newtheorem{theorem}{Theorem}[section]
\title{
Deep Learning Tubes for Tube MPC
}
\author{David D. Fan$^{1,2}$, Ali-akbar Agha-mohammadi$^{2}$,
and Evangelos A. Theodorou$^{1}$
\thanks{$^{1}$Institute for Robotics and Intelligent Machines, Georgia Institute of Technology, Atlanta, GA, USA}%
\thanks{$^{2}$NASA Jet Propulsion Laboratory, California Institute of Technology, Pasadena, CA, USA}%
}
\begin{document}
\maketitle


\begin{abstract}
Learning-based control aims to construct models of a system to use for planning or trajectory optimization, e.g. in model-based reinforcement learning.  In order to obtain guarantees of safety in this context, uncertainty must be accurately quantified.  This uncertainty may come from errors in learning (due to a lack of data, for example), or may be inherent to the system.  Propagating uncertainty forward in learned dynamics models is a difficult problem.  In this work we use deep learning to obtain expressive and flexible models of how distributions of trajectories behave, which we then use for nonlinear Model Predictive Control (MPC).  We introduce a deep quantile regression framework for control that enforces probabilistic quantile bounds and quantifies epistemic uncertainty.  Using our method we explore three different approaches for learning tubes that contain the possible trajectories of the system, and demonstrate how to use each of them in a Tube MPC scheme.  We prove these schemes are recursively feasible and satisfy constraints with a desired margin of probability.  We present experiments in simulation on a nonlinear quadrotor system, demonstrating the practical efficacy of these ideas.
\end{abstract}

\IEEEpeerreviewmaketitle

\section{Introduction}
In controls and planning, the idea of adapting to unknown systems and environments is appealing; however, guaranteeing safety and feasibility in the midst of this adaptation is of paramount concern.  The goal of robust MPC is to take into account uncertainty while planning, whether it be from modeling errors, unmodeled disturbances, or randomness within the system itself \cite{bemporad1999robust}.  In addition to safety, other considerations such as optimality, real-time tractability, scalability to high dimensional systems, and hard state and control constraints make the problem more difficult.  In spite of these difficulties, learning-based robust MPC continues to receive much attention \cite{Hewing2018,Wabersich2018a,Ravanbakhsh2018,Hewing2017,Gao2014,fan2016differential,Ostafew2016a,Ostafew2014,Aswani2013,bujarbaruah2019adaptive}.  However, in an effort to satisfy the many competing design requirements in this space, certain restrictive assumptions are often made, which include predetermined error bounds, restricted classes of dynamics models, or fixed parameterizations of the uncertainty.  

Consider the following nonlinear dynamics equation that describes a real system:
\begin{equation}
    x_{t+1} = f(x_t,u_t) + w_t
    \label{eq:x}
\end{equation}
where $x\in\mathbb{X}\subseteq\mathbb{R}^n$ is the state, $u\in\mathbb{U}\subseteq\mathbb{R}^m$ are controls, and $w\in\mathbb{R}^n$ is noise or disturbance.

\begin{figure}[t]
\centering
\includegraphics[trim=2cm 0.5cm 1.5cm 1cm, clip,width=0.7\linewidth]{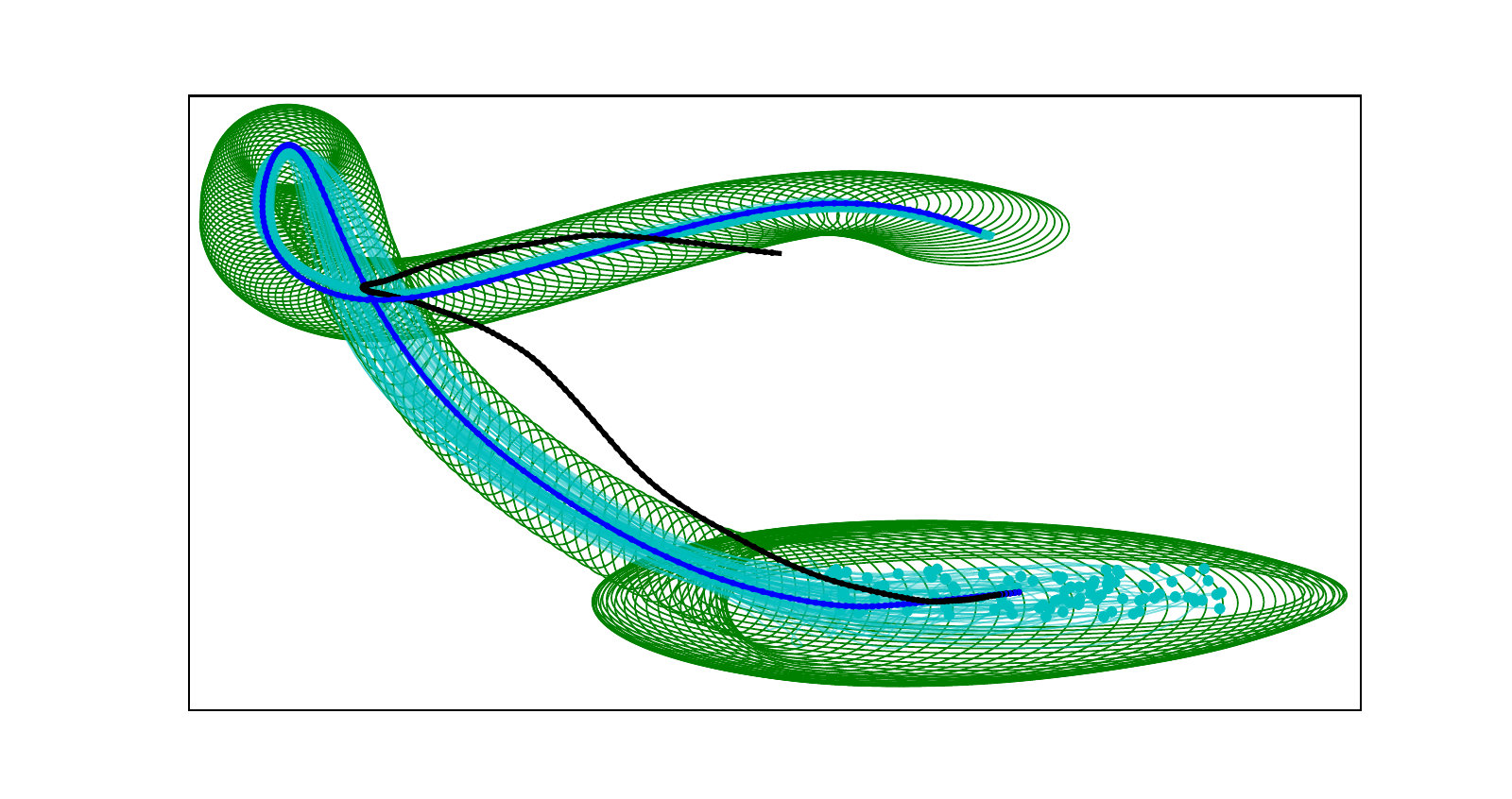}
\caption{A learned tube (green) with learned mean (blue) that captures the distribution of trajectories (cyan) on a full quadrotor model tracking a target trajectory (black), propagated for 200 timesteps forward from the initial states (dots).}
\label{fig:splash}
\end{figure}

When attempting to find a model which captures the behavior of $x_t$, there will be error that results from insufficient data, lack of knowledge of $w_t$, or unknown or unobserved higher-dimensional dynamics not observed in $x_t$.  One traditional approach has been to find robust bounds on the model error and plan using this robust model, i.e. $|w_t|\leq W$.  However, this approach can be too conservative since it is not time or space varying and does not capture the distribution of the disturbance \cite{Aswani2013,segu2019general}.  To partially address this one could extend $W$ to be time and state-varying, i.e. $W = W(x_t,u_t,t)$, as is commonly done in the robust MPC and control literature.  For example, \cite{lopez2019dynamic} takes this approach for feedback linearizable systems using boundary layer control, \cite{singh2017robust} leverages contraction theory and sum-of-squares optimization to find stabilizing controllers for nonlinear systems under uncertainty, and \cite{villanueva2017robust} solves for forward invariant tubes using min-max differential inequalities (See \cite{kohler2019robust} for a recent overview of other related approaches).  In this work we aim to learn this uncertainty directly from data, which allows us to avoid structural assumptions of the system of interest or restrictive parameterizations of uncertainty.  We learn a \textit{quantile} representation of the bounds of the distribution of possible trajectories, in the form of a tube around some nominal trajectory (Figure 1).

More closely related to our approach is the wide range of recent work in learning-based planning and control that seeks to handle model uncertainty probabilistically, where a model is constructed from one-step prediction measurements, and it is assumed that the true underlying distribution of the function is Gaussian \cite{fan2019bayesian,hewing2019learning,deisenroth2011pilco,liu2019robust,Berkenkamp2017}:
\begin{equation}
    P(x_{t+1}|x_t,u_t) = \mathcal{N}(\mu(x_t,u_t),\sigma(x_t,u_t)).
\end{equation}
where the mean function $\mu:\mathbb{X}\times\mathbb{U}\rightarrow\mathbb{X}$ and variance function $\sigma:\mathbb{X}\times\mathbb{U}\rightarrow\mathbb{X}^2$ capture the uncertainty of the dynamics for one time step.  Various approaches for approximating this posterior distribution have been developed \cite{girard2003gaussian, gal2016improving}.  For example, in PILCO and related work \cite{hewing2019learning}, moment matching of the posterior distribution is performed to find an analytic expression for the evolution of the mean and the covariance in time. 
However, in order to arrive at these analytic expressions, assumptions must be made which lower the descriptive power for the model to capture the true underlying distribution, which may be multi-modal and highly non-Gaussian.  Furthermore, conservative estimates of the variance of the distribution will grow in an unbounded manner as the number of timesteps increases \cite{Koller2018}.  The result is that any chance constraints derived from these approximate models may be inaccurate.  In Figure \ref{fig:distributions} we compare the classic GP-based moment matching approach for propagating uncertainty with our own deep quantile regression method on two different functions.  While GP-moment matching can both underestimate and overestimate the true distribution of trajectories, our method is less prone to failures due to analytic simplifications or assumptions.

\begin{figure}[tb]
\centering
\subfloat{
    \includegraphics[width=0.5\linewidth,trim={40 5 20 20},clip]{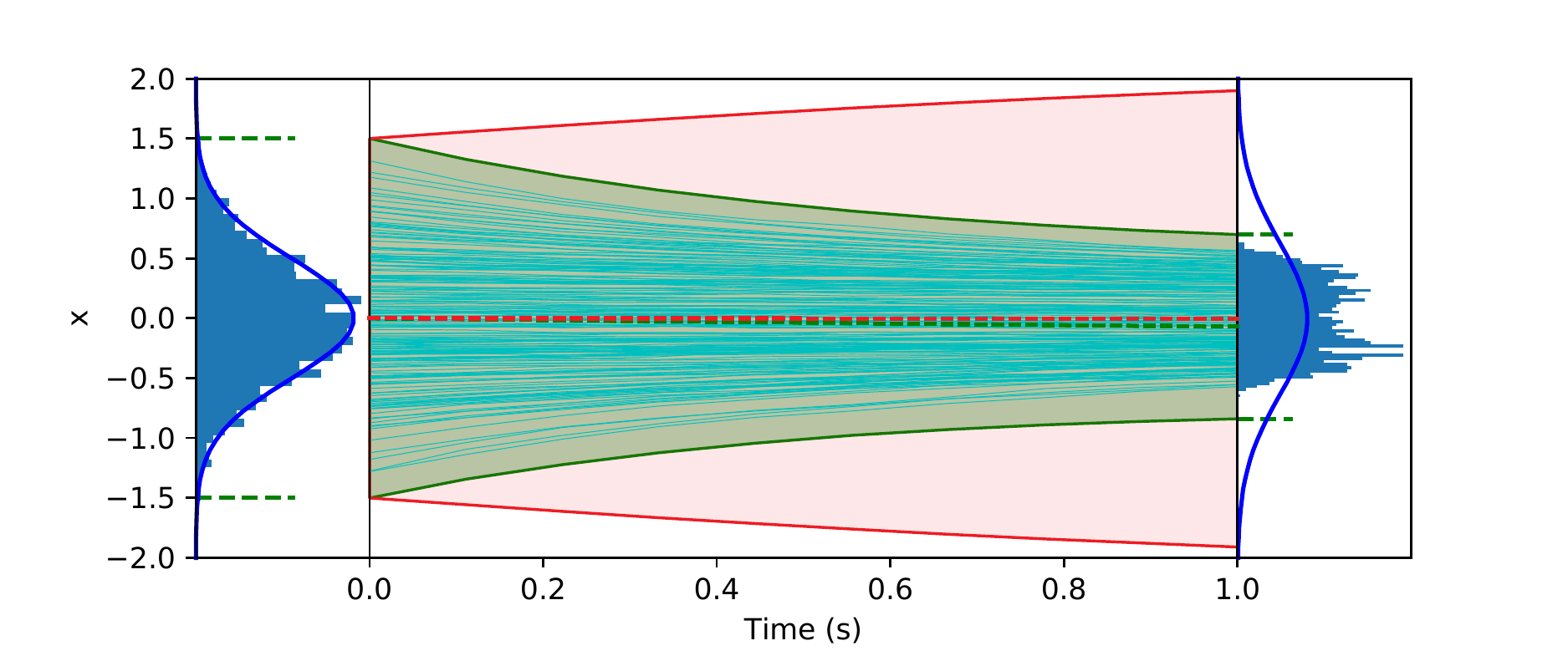}
}
\subfloat{
    \includegraphics[width=0.5\linewidth,trim={40 5 20 20},clip]{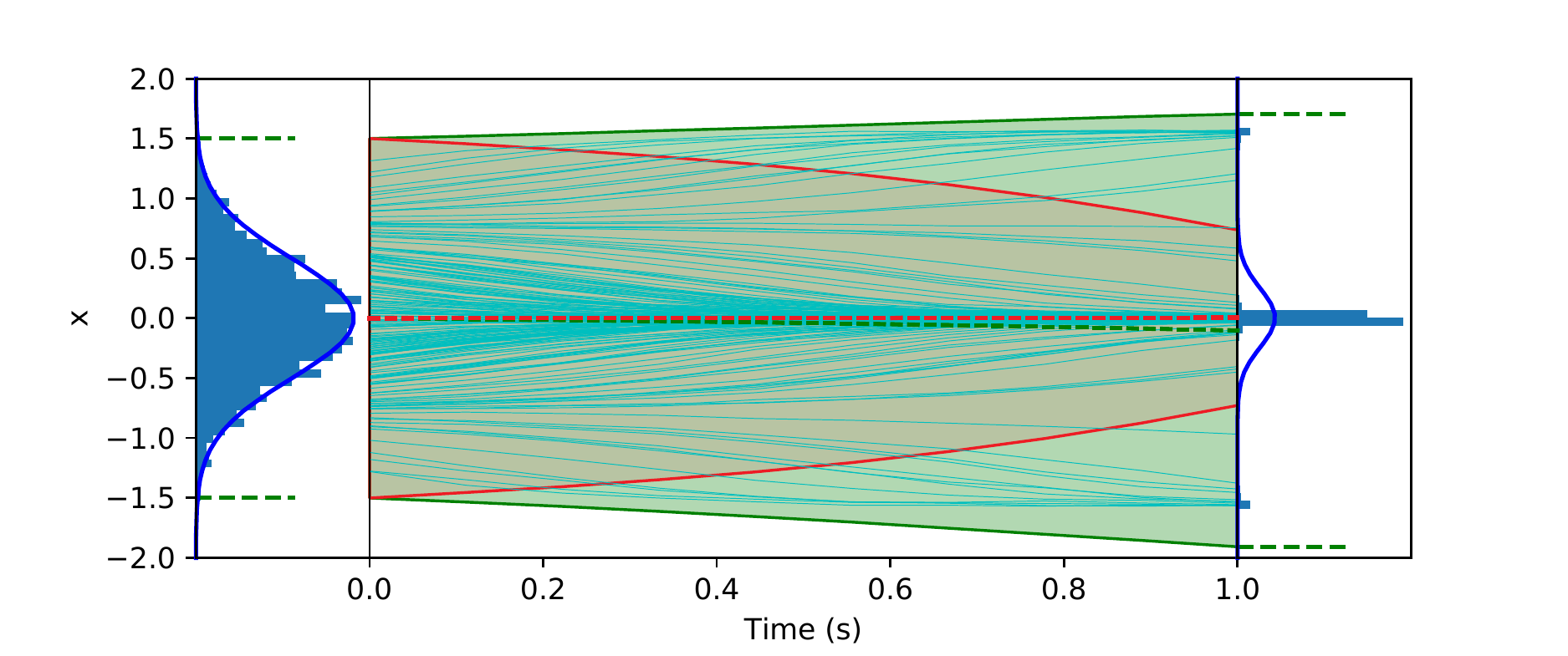}
}
\caption{Comparison of 3-$\sigma$ bounds on distributions of trajectories using GP moment matching (red) and the proposed quantile regression method (green).  100 sampled trajectories are shown (cyan) along with starting and ending distributions (blue, left and right histograms).  Left:  GP moment matching overestimates the distribution for the dynamics $\dot{x} = -x|x|$, while our method models it well.  Right:  GP moment matching underestimates the distribution for the dynamics $\dot{x}=-\sin(4x)$, while our method captures the tails of the distribution.}
\label{fig:distributions}
\end{figure}

An alternative approach to Bayesian modeling for robust MPC has been to use quantile bounds to bound the tails of the distribution.  This has the advantage that for planning in safety-critical contexts, we are generally not concerned with the full distribution of the trajectories, but the tails of these distributions only; specifically, we are interested in the probability of the tail of the distribution violating a safe set.  A few recent works have taken this approach in the context of MPC; for example, \cite{bradford2019nonlinear} computes back-off sets with Gaussian Processes, and \cite{bujarbaruah2019adaptive} uses an adaptive control approach to parameterize quantile bounds.

We are specifically interested in the idea of learning quantile bounds using the expressive power of deep neural networks.  Quantile bounds give an explicit probability of violation at each timestep and allow for quantifying uncertainty which can be non-Gaussian, skewed, asymmetric, multimodal, and heteroskedastic \cite{tagasovska2019single}.  Quantile regression itself is a well-studied field with the first results from \cite{koenker1978regression}, see also \cite{koenker2001quantile,taylor1999quantile}.  Quantile regression in deep learning has been also recently considered as a general statistical modeling tool \cite{rodrigues2018beyond,zhang2019regression,sadeghi2019efficient,yan2018parsimonious, tagasovska2019single}.  Bayesian quantile regression has also been studied \cite{kozumi2011gibbs,yang2016posterior}.  Recently quantile regression has gained popularity as a modeling tool within the reinforcement learning community \cite{dabney2018distributional}.

In addition to introducing a method for deep learning quantile bounds for distributions of trajectories, we also show how this method can be tailored to a tube MPC framework.  Tube MPC \cite{Langson2004,mayne2014model} was introduced as a way to address some of the shortcomings of classic robust MPC; specifically that robust MPC relied on optimizing over an open-loop control sequence, which does not predict the closed-loop behavior well.  Instead, tube MPC seeks to optimize over a local policy that generates some closed-loop behavior, which has advantages of robust constraint satisfaction, computational efficiency, and better performance.  The use of tube MPC allows us to handle high dimensional systems, as well as making the learning problem more efficient, tractable, and reliable.  To the best of our knowledge, our work is the first to combine deep quantile regression with tube-based MPC, or indeed any learning-based robust MPC method.

The structure of the paper is as follows:  In Section \ref{sec:2} we present our approach for learning tubes, which includes deep quantile regression, enforcing a monotonicity condition with a negative divergence loss function, and quantifying epistemic uncertainty.  In Section \ref{sec:3} we present three different learning tube MPC schemes that take advantage of our method.  In Section \ref{sec:4} we perform several experiments and studies to validate our method, and conclude in Section \ref{sec:5}.

\section{Deep Learning Tubes}
\label{sec:2}
\subsection{Learning Tubes For Robust and Tube MPC}
\begin{figure}[tb]
\centering
\includegraphics[trim=0cm 9.3cm 16.9cm 0cm,clip,width=0.55\linewidth]{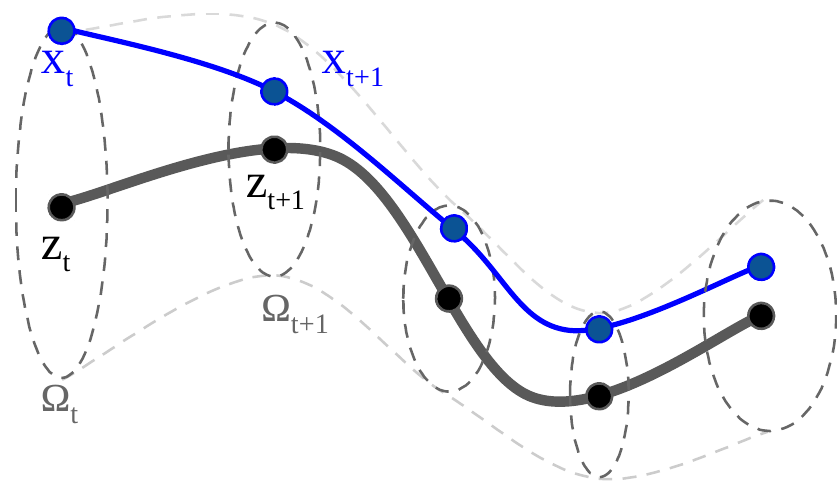}
\caption{Diagram of a tube around the dynamics of $z$, within which $x$ stays invariant.  Note that the tube set $\Omega_t$ is time-varying.}
\label{fig:tube}
\end{figure}
We propose learning time-varying invariant sets as a way to address the difficulties with propagating uncertainty for safety critical control, as well as to characterize the performance of a learned model or tracking controller.  Consider the following \textit{quantile} description of the dynamics:
\begin{align}
    \label{eq:tube_dyn}
    x_{t+1} &= f(x_t,u_t) + w_t\\\nonumber
    z_{t+1} &= f_z(z_t,v_t)\\\nonumber
    \omega_{t+1} &= f_\omega(\omega_t,z_t,v_t,t)\\\nonumber
    P(d(x_t,z_t)& \leq \omega_t) \geq \alpha, \quad\forall t\in\mathbb{N}
\end{align}
where $z\in\mathbb{Z}\subseteq\mathbb{R}^{n_z}$ is a latent state of equal or lower dimension than $x$, i.e. $n_z\leq n$, and $v\in\mathbb{V}\subseteq\mathbb{R}^{m_z}$ is a pseudo-control input, also of equal or lower dimension than $u$, i.e. $m_z\leq m$.  In the simplest case, we can fix $v_t = u_t$ and/or $z_t = x_t$.  Also, $\omega\in\mathbb{R}^{n_z}$ is a vector that we call the \textit{tube width}, with each element of $\omega>0$.

This defines a "tube" around the trajectory of $z$ within which $x$ will stay close to $z$ with probability greater than $\alpha\in[0,1]$ (Figure \ref{fig:tube}).   More formally, we can define the notion of closeness between some $x$ and $z$ by, for example, the distance between $z$ and the projection of $x$ onto $\mathbb{Z}$:  $d(x,z)=|P_\mathbb{Z}(x)-z|\in\mathbb{R}^{n_z}$, where $P_\mathbb{Z}$ is a projection operator.  Let $\Omega_\omega(z)\subset\mathbb{X}$ be a set in $\mathbb{X}$ associated with the tube width $\omega$ and $z$:
\begin{equation}
    \Omega_{\omega}(z):=\{x\in\mathbb{X}: d(x,z) \leq \omega\}.
\end{equation}
where the $\leq$ is element-wise.  Other tube parameterizations are possible, for example $\Omega_{\omega}(z):=\{x\in\mathbb{X}:\|P_\mathbb{Z}(x)-z\|_\omega\leq 1\}$, where $\omega\in\mathbb{R}^{n_z\times n_z}$ instead.

The coupled system (\ref{eq:tube_dyn}) induces a sequence of sets $\{\Omega_{\omega_t}(z_t)\}_{t=0}^T$ that form a tube around $z_t$.  Our goal is to learn how this tube changes over time in order to use it for planning safe trajectories.  

\subsection{Quantile Regression}
Our challenge is to learn the dynamics of the tube width, $f_\omega$.  Given data collected as trajectories $\mathcal{D}=\{x_t,u_t,x_{t+1},z_t,v_t,z_{t+1},t\}_{t=0}^T$, we can formulate the learning problem for $f_\omega$ as follows.

Let $f_\omega$ be parameterized with a neural network,  $f_\omega^\theta$.  \sloppy For a given $t$ and data point $\{x_t,u_t,x_{t+1},z_t,v_t,z_{t+1},t\}$, let $\omega_t=d(x_t,z_t)$ be the input tube width to $f_\omega$, and $\omega_{t+1}=d(x_{t+1},z_{t+1})$ the candidate output tube width.  The candidate tube width at $t+1$ must be less than the estimate of the tube width at $t+1$, i.e: $\omega_{t+1}\leq f_\omega^\theta(\omega_t,z_t,v_t,t)$.  To train the network $f_\omega^\theta$ to respect these bounds we can use the following \textit{check loss} function:
\begin{align}
    L_\omega^\alpha(\theta,\delta) &=  L^\alpha(\omega_{t+1},f_\omega^\theta(\omega_{t},z_{t},v_{t},{t}))\\   L^\alpha(y,r)&=\begin{cases}
               \alpha|y-r| \quad\quad\quad y> r\\
               (1-\alpha)|y-r| \quad y\leq r
            \end{cases}\nonumber
    \label{eq:tube_loss}
\end{align}
where the loss is a function of each data sample $\delta=\{\omega_{t+1},\omega_t,z_t,v_t,t\}$.   With the assumption of i.i.d. sampled data, when $L_\omega^\alpha(\theta,\delta)$ is minimized the quantile bound will be satisfied, (see Figure \ref{fig:tube_learn} and Theorem \ref{thm:bound_loss}).
In practice we can smooth this loss function near the inflection point $y=r$ with a slight modification, by multiplying $L_\omega^\alpha$ with a Huber loss \cite{huber1992robust,dabney2018distributional}.

\begin{figure}[tb]
\centering
\includegraphics[trim=0cm 9.5cm 8.5cm 0cm, clip,width=\linewidth]{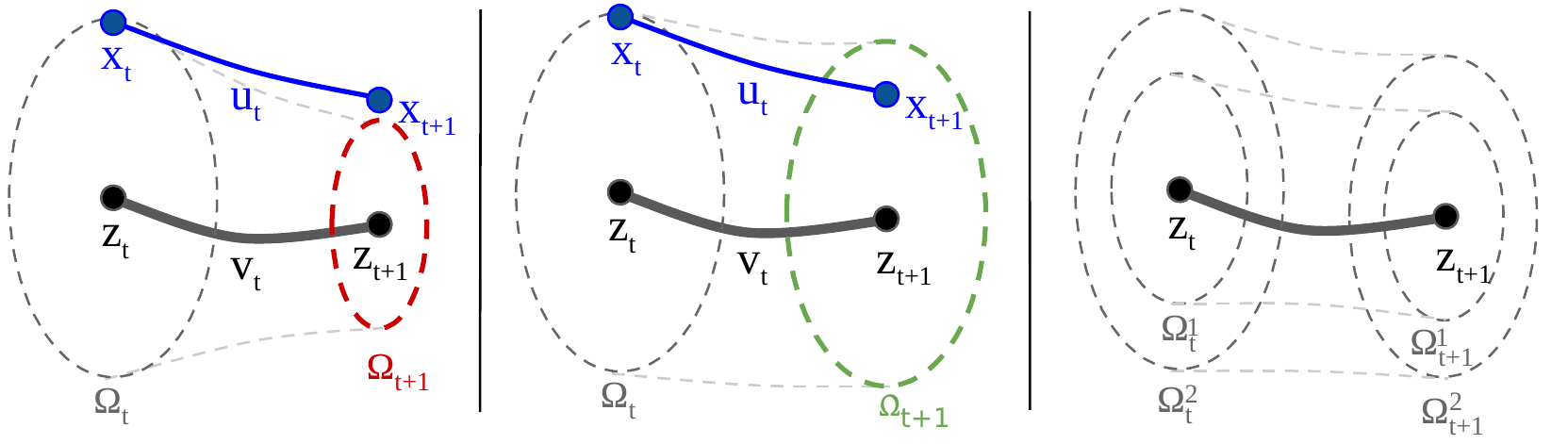}
\caption{Learning tube dynamics from data.  Left:  The predicted tube at $t+1$ is too small.  The gradient of the loss function will increase its size.  Middle:  The predicted tube at $t+1$ is larger than the actual trajectory in $x$ taken, and will be shrunk.  Right:  The mapping $f_\omega(\omega,z_t,v_t,t)$ is monotonic with respect to $\omega$, which results in $\Omega_t^1\subseteq\Omega_t^2 \implies \Omega_{t+1}^1\subseteq\Omega_{t+1}^2$.}
\label{fig:tube_learn}
\end{figure}

\begin{theorem}
Let $\theta^*$ minimize $\mathbb{E}_\delta [L_\omega^\alpha(\theta,\delta)]$.  Then with probability $\alpha$, $f_\omega^{\theta^*}(\omega,z,v,t)$ is an upper bound for $f_\omega(\omega,z,v,t)$.
\label{thm:bound_loss}
\end{theorem}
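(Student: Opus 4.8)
The plan is to reduce the claim to the classical fact that the population minimizer of the check (pinball) loss is a conditional quantile. Fix a value of the conditioning tuple $s=(\omega,z,v,t)$ and let $Y=\omega_{t+1}=d(x_{t+1},z_{t+1})$ denote the realized next-step tube width, whose randomness comes from the disturbance $w_t$ acting through \eqref{eq:tube_dyn}; write $F_s(r)=P(Y\le r\mid s)$ for its conditional c.d.f. In this notation ``$f_\omega(\omega,z,v,t)$'' is a draw of $Y$, and the statement to prove is $P\big(Y\le f_\omega^{\theta^\ast}(s)\big)\ge\alpha$. First I would use the tower property to write
\begin{equation}
\mathbb{E}_\delta\!\left[L_\omega^\alpha(\theta,\delta)\right]=\mathbb{E}_s\!\left[\,g_s\!\big(f_\omega^\theta(s)\big)\right],\qquad g_s(r):=\mathbb{E}\!\left[L^\alpha(Y,r)\,\middle|\,s\right].
\end{equation}
Because the inner conditional expectation depends on $\theta$ only through the scalar $r=f_\omega^\theta(s)$, and because (under the standard universal-approximation/realizability assumption on the network class) the value $f_\omega^\theta(s)$ may be chosen freely and essentially independently for each $s$, minimizing the left-hand side over $\theta$ is equivalent to minimizing $g_s(r)$ over $r\in\mathbb{R}$ for (almost) every $s$; the exchange of this pointwise minimization with the outer expectation is a routine measurable-selection argument.

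Next I would carry out the one-variable convex analysis of $g_s$. Writing $L^\alpha(y,r)=\alpha(y-r)^{+}+(1-\alpha)(r-y)^{+}$, the function $g_s$ is convex in $r$ (a nonnegative combination of convex functions), with left and right derivatives $g_s'(r^-)=F_s(r^-)-\alpha$ and $g_s'(r^+)=F_s(r)-\alpha$ (differentiate under the expectation; the only subtlety is the kink at $Y=r$, which forces the one-sided treatment when $Y\mid s$ has atoms). Hence $r$ minimizes $g_s$ if and only if $F_s(r^-)\le\alpha\le F_s(r)$, i.e. $r$ is an $\alpha$-quantile of $Y\mid s$. In particular $r^\ast=f_\omega^{\theta^\ast}(s)$ satisfies $F_s(r^\ast)=P(Y\le r^\ast\mid s)\ge\alpha$.

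Finally, since this holds for (almost) every $s$, it also holds after integrating out $s$, giving $P\big(f_\omega(\omega,z,v,t)\le f_\omega^{\theta^\ast}(\omega,z,v,t)\big)\ge\alpha$, which is exactly the assertion of Theorem~\ref{thm:bound_loss}. I expect the only genuine obstacle to be the reduction in the first paragraph: one must make precise the sense in which the neural-network parameterization is expressive enough for the population minimizer to attain the pointwise-optimal quantile at (almost) every $s$, and one must justify interchanging the pointwise minimization with the expectation over $s$ — this is where an explicit expressivity assumption is needed, and where the i.i.d.\ sampling hypothesis enters if one further wants to link $\theta^\ast$ to the empirical minimizer actually used in practice. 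The remaining steps are the standard convexity computation for the pinball loss, requiring only mild care about atoms in the conditional law of $Y$.
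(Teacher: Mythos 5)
Your proposal is correct and follows essentially the same route as the paper's proof: both reduce the claim to the classical fact that the population minimizer of the check (pinball) loss is the $\alpha$-quantile, the paper doing so by setting the derivative of the expected loss with respect to $r$ to zero. Your version is simply more careful — conditioning on $(\omega,z,v,t)$, using convexity with one-sided derivatives to handle atoms, and making explicit the realizability/expressivity assumption that the paper leaves implicit when it treats $r^*(x)$ as a freely chosen function.
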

\begin{proof}
With a slight abuse of notation, let $x$ denote the input variable to the loss function, and consider the expected loss $\mathbb{E}_x[L^\alpha(y(x),r(x))]$.  We find the minimum of this loss w.r.t. $r$ by setting the gradient to 0:
\begin{align}
    & \frac{\partial}{\partial r^*} \mathbb{E}_x[L^\alpha(y(x),r^*(x))]\\\nonumber
    \quad &=\int_{y(x)> r^*(x)}\alpha p(x)dx - \int_{y(x)\leq r^*(x)}(1-\alpha)p(x)dx\\\nonumber
    \quad &=\alpha p(y(x)> r^*(x)) - (1-\alpha)p(y(x)\leq r^*(x)) = 0\\\nonumber
    \quad &\implies p(y(x)\leq r^*(x)) = \alpha
\end{align}
Replacing $r^*(x)$ with $f_\omega^{\theta^*}(\omega,z,v,t)$ and $y(x)$ with $f_\omega(\omega,z,v,t)$ completes the proof.
\end{proof}

Note that quantile regression gives us tools for learning tube dynamics $f_\omega(\omega,z,v,t,\alpha)$ that are a function of the quantile probability $\alpha$ as well.  This opens the possibility to dynamically varying the margin of safety while planning, taking into account acceptable risks or value at risk \cite{gaglianone2011evaluating}.  For example, in planning a trajectory, one could choose a higher $\alpha$ for the near-term and lower $\alpha$ in the later parts of the trajectory, reducing the conservativeness of the solution.

Additionally, we note that we can train the tube bounds dynamics in a recurrent fashion to improve long sequence prediction accuracy.  While we present the above and following theorems in the context of one timestep, they are easily extensible to the recurrent case.

\subsection{Enforcing Monotonicity}
In addition to the quantile loss we also introduce an approach to enforce monotonicity of the tube with respect to the tube width (Figure \ref{fig:tube_learn}, right).  This is important for ensuring recursive feasibility of the MPC problem, as well as allowing us to shrink the tube width during MPC at each timestep if we obtain measurement updates of the current state, or, in the context of state estimation, an update to the covariance of the estimate of the current state.  Enforcing monotonicity in neural networks has been studied with a variety of techniques \cite{sill1998monotonic,you2017deep}.  Here we adopt the approach of using a loss function that penalizes the network for having negative divergence, similar to \cite{gupta2019monotonic}:
\begin{equation}
    L_m(\theta,\delta) = -\min(0,\textrm{div}_\omega f_\omega(\omega,z,v,t))
\end{equation}
where $\textrm{div}_\omega$ is the divergence of $f_\omega$ with respect to $\omega$.
In practice we find that under gradient-based optimization, this loss decreases to 0 in the first epoch and does not noticeably affect the minimization of the quantile loss.  Minimizing $L_m(\theta, \delta)$ allows us to make claims about the monotonicity of the learned tube:

\begin{theorem}
Suppose $\theta^*$ minimizes $\mathbb{E}_\delta[L_m(\theta,\delta)]$ and $\mathbb{E}_\delta[L_m(\theta^*,\delta)]=0$.  Then for any $z_t\in\mathbb{Z},\, v_t\in\mathbb{V},\,t\in\mathbb{N}$ and $\omega^1_t,\omega^2_t\in\mathbb{R}^{n_z}$, if $\Omega_{\omega^1_t}(z_t)\subseteq\Omega_{\omega^2_t}(z_t)$, then $\Omega_{\omega^1_{t+1}}(z_{t+1})\subseteq\Omega_{\omega^2_{t+1}}(z_{t+1})$.
\label{thm:monotone}
\end{theorem}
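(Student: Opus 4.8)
The plan is to reduce the set-containment statement to a componentwise inequality between tube widths and then push that inequality through the learned map $f_\omega^{\theta^*}$. First I would record the elementary fact that, with $\Omega_\omega(z)=\{x\in\mathbb{X}:d(x,z)\le\omega\}$ and $\le$ read elementwise, $\omega^1\le\omega^2$ implies $\Omega_{\omega^1}(z)\subseteq\Omega_{\omega^2}(z)$; conversely, under the mild assumption that $d(\cdot,z_t)$ attains all widths of interest on $\mathbb{X}$, the hypothesis $\Omega_{\omega^1_t}(z_t)\subseteq\Omega_{\omega^2_t}(z_t)$ gives $\omega^1_t\le\omega^2_t$. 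So the theorem reduces to the claim $\omega^1_t\le\omega^2_t\Rightarrow\omega^1_{t+1}\le\omega^2_{t+1}$, where $\omega^i_{t+1}=f_\omega^{\theta^*}(\omega^i_t,z_t,v_t,t)$. Note that $z_{t+1}=f_z(z_t,v_t)$ is the \emph{same} for both tubes because it does not depend on $\omega$, so once $\omega^1_{t+1}\le\omega^2_{t+1}$ is established the containment at $t+1$ is immediate from the definition of $\Omega$.

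Next I would extract monotonicity of $f_\omega^{\theta^*}$ from the loss hypothesis. Since $L_m(\theta,\delta)=-\min(0,\textrm{div}_\omega f_\omega(\omega,z,v,t))\ge0$ pointwise, the assumption $\mathbb{E}_\delta[L_m(\theta^*,\delta)]=0$ forces $\textrm{div}_\omega f_\omega^{\theta^*}(\omega,z,v,t)\ge0$ at every sample in the support of the data distribution, and by continuity of the network everywhere on the relevant domain. In the scalar tube-width case $n_z=1$ this is precisely $\partial f_\omega^{\theta^*}/\partial\omega\ge0$, i.e.\ $\omega\mapsto f_\omega^{\theta^*}(\omega,z_t,v_t,t)$ is nondecreasing; applying it to $\omega^1_t\le\omega^2_t$ yields $\omega^1_{t+1}\le\omega^2_{t+1}$, and combined with the previous paragraph this completes the proof.

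The part I expect to be the main obstacle is making this monotonicity-extraction step rigorous when $n_z>1$. Nonnegative divergence controls only the trace $\sum_i\partial f_{\omega,i}/\partial\omega_i$ of the Jacobian of $f_\omega^{\theta^*}$, not the off-diagonal partials $\partial f_{\omega,i}/\partial\omega_j$, so it does not by itself make $f_\omega^{\theta^*}$ order-preserving with respect to the product order on $\mathbb{R}^{n_z}$, which is what the elementwise containment requires. To close this gap I would either (i) restrict the theorem to $n_z=1$, or to a coordinatewise-decoupled parameterization in which the $i$-th output width depends only on the $i$-th input width, or (ii) strengthen $L_m$ to penalize \emph{each} negative partial $\partial f_{\omega,i}/\partial\omega_j$, i.e.\ impose a nonnegative-Jacobian (Metzler-type) condition, which does make $f_\omega^{\theta^*}$ monotone in the product order; the argument then goes through verbatim with ``derivative'' replaced by ``Jacobian''. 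A secondary, minor point is the null set on which $\textrm{div}_\omega f_\omega^{\theta^*}$ may be undefined or negative for nonsmooth activations; this is handled by using a smooth surrogate network or by the continuity/closure remark above. I would state the final theorem under whichever of the coordinate conditions the implementation actually satisfies.
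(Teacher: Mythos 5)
Your proposal follows essentially the same route as the paper's proof: reduce the containment $\Omega_{\omega^1_t}(z_t)\subseteq\Omega_{\omega^2_t}(z_t)$ to the elementwise inequality $\omega^1_t\le\omega^2_t$, observe that $z_{t+1}$ is common to both tubes, deduce from $\mathbb{E}_\delta[L_m(\theta^*,\delta)]=0$ together with pointwise nonnegativity of $L_m$ that the penalty vanishes on every sample, conclude monotonicity of $f_\omega^{\theta^*}$ in $\omega$, and push $\omega^1_t\le\omega^2_t$ through to $\omega^1_{t+1}\le\omega^2_{t+1}$. Where you differ is in rigor rather than strategy: the paper passes directly from the vanishing divergence penalty to ``$\nabla_\omega f_\omega>0$, hence $f_\omega$ nondecreasing in $\omega$,'' which is precisely the step you flag---for $n_z>1$, $\mathrm{div}_\omega f_\omega\ge 0$ constrains only the trace of the Jacobian and says nothing about the off-diagonal partials $\partial f_{\omega,i}/\partial\omega_j$, so it does not by itself give order preservation in the product order on $\mathbb{R}^{n_z}$. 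Your proposed fixes (restrict to $n_z=1$ or a coordinatewise-decoupled width model, or strengthen $L_m$ to penalize every negative Jacobian entry) are the right ways to close that gap, and your remark that zero expected loss only controls the map on the support of the data distribution (extended by continuity) makes explicit a hypothesis the paper uses tacitly; a further minor point is that the paper writes $L_m(\theta,\delta)>0$ for all $\theta,\delta$, which should read $\ge 0$ as in your version, since otherwise zero expectation would be impossible. So your argument is correct under your stated coordinate conditions, and is in fact a tightened version of the paper's one-line monotonicity step.
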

\begin{proof}
Since $\forall \theta,\delta,\, L_m(\theta,\delta)>0$ and $\mathbb{E}[L_m(\theta^*,\delta)]=0$, then $L_m(\theta^*,\delta)=0$.  Then $\nabla_\omega f_\omega(\omega,z,v,t) > 0$ and $f_\omega$ is nondecreasing with respect to $\omega$.  Since $\Omega_{\omega^1_t}\subseteq\Omega_{\omega^2_t}$, then $\omega^1_t\leq\omega^2_t$, so $f_\omega(\omega^1_t,z_t,v_t,t)\leq f_\omega(\omega^2_t,z_t,v_t,t)$, which implies that $\Omega_{\omega^1_{t+1}}(z_{t+1})\subseteq\Omega_{\omega^2_{t+1}}(z_{t+1})$.
\end{proof}

\subsection{Epistemic Uncertainty}
Finally, in order to account for uncertainty in regions where no data is available for estimating quantile bounds, we incorporate methods for estimating epistemic uncertainty.  Such methods can include Bayesian neural networks, Gaussian Processes, or other heuristic methods in deep learning \cite{gal2016dropout,dabney2018distributional,rasmussen2003gaussian}.  For the experiments in this work we adopt an approach that adds an additional output layer to our quantile regression network that is linear with respect to orthonormal weights \cite{tagasovska2019single}.  We emphasize that a wide range of methods for quantifying epistemic uncertainty are available and we are not restricted to this one approach; however, for the sake of clarity, we present in detail our method of choice.  Let $g(z,v,t)$ be a neural network with either fixed weights that are either randomly chosen or pre-trained, with $l$ dimensional output.  We branch off a second output with a linear layer: $C^\intercal g(z,v,t)$, where $C\in\mathbb{R}^{l\times k}$.  The estimate of epistemic uncertainty is chosen as $u_e(z,v,t) = \|C^\intercal g(z,v,t)\|^2$.  Then, the parameters $C$ are trained by minimizing the following loss:
\begin{equation}
    L_u(C,\delta) = \|C^\intercal g(z,v,t)\|^2 + \lambda \|C^\intercal C - I_k\|.
\end{equation}
where $\lambda>0$ weights the orthonormal regularization.  Minimizing this loss produces a network that has a value close to 0 when the input data is in-distribution, and increases with known rate as the input data moves farther from the training distribution (Figure \ref{fig:epistemic_example}, and see \cite{tagasovska2019single} for detailed analysis).  We scale the predicted quantile bound by the epistemic uncertainty, then add a maximum bound to prevent unbounded growth as $\omega$ grows: 
\begin{equation}
f_\omega(\omega,z,v,t) \leftarrow \min\{(1 + \beta u_e(z,v,t))f_\omega(\omega,z,v,t),W\}    
\end{equation}
where $\beta>0$ is a constant parameter that scales the effect of the epistemic uncertainty, and $W$ is a vector that provides an upper bound on the total uncertainty.  Finding an optimal $\beta$ analytically may require some assumptions such as a known Lipschitz constant of the underlying function, non-heteroskedastic noise, etc., which we leave for future investigation.  We set $\beta$ and $W$ by hand and find this approach to be effective in practice.  

We expect that as the field matures, methods for providing guarantees on well-calibrated epistemic uncertainty in deep learning will continue to improve.  In the meantime, we make the assumption that we have well-calibrated epistemic uncertainty, an assumption similar to those made with other learning-based controls methods, such as choosing noise covariances, disturbance magnitudes, or kernel types and widths.  The main benefit of leveraging epistemic uncertainty modeling is that it allows us to maintain guarantees of safety and recursive feasibility when we have a limited amount of data to learn from.  In the case when no reliable epistemic estimate is available, we can proceed if we simply assume there is sufficient data to learn a good model offline.

\begin{figure}[tb]
\centering
\includegraphics[trim=0.5cm 0cm 0.5cm 0.5cm, clip,width=0.8\linewidth]{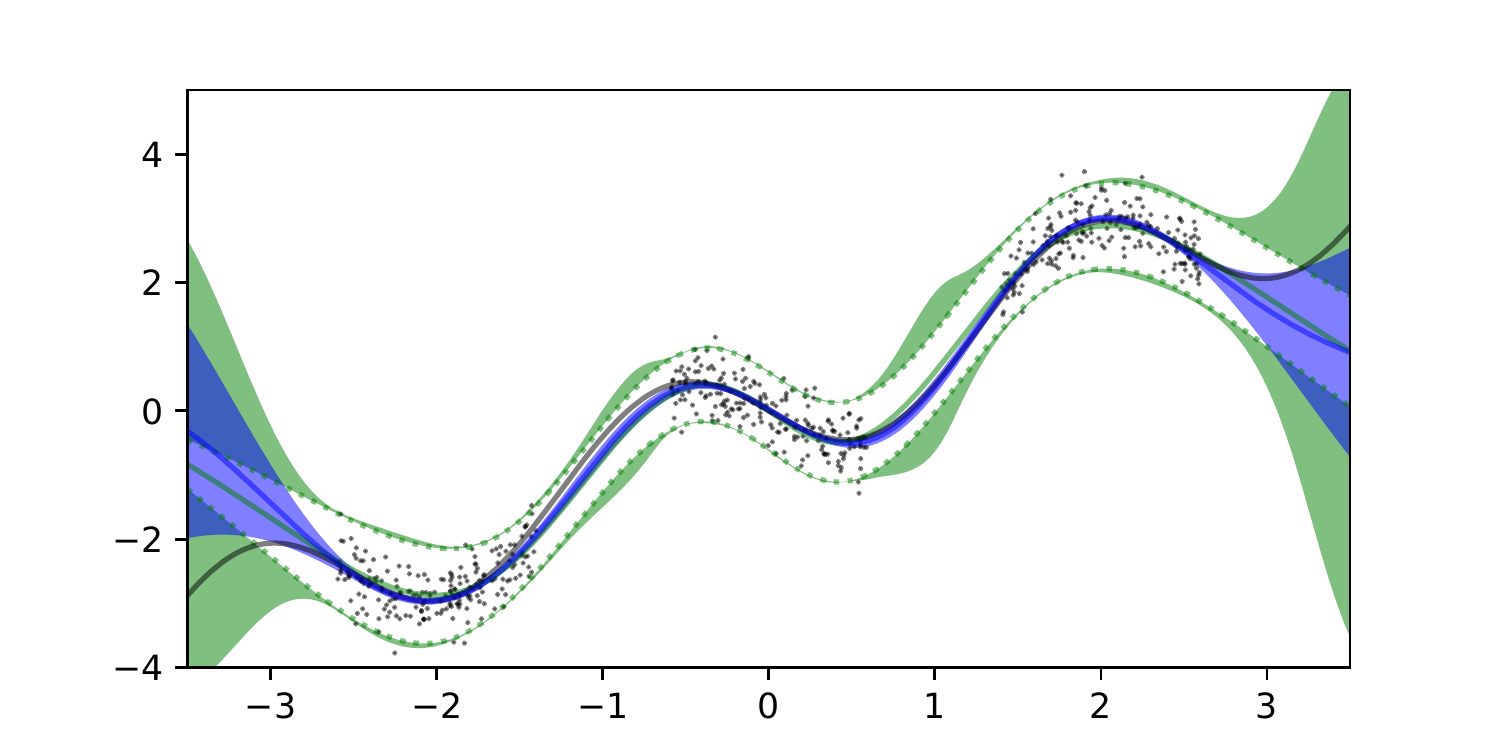}
\caption{Estimating epistemic uncertainty for a 1-D function.  Black dots indicate noisy data used to train the models, black line indicates the true function.  Green colors indicate trained neural network models with green line indicating mean, and green dotted lines indicating learned 99\% quantile bounds.  Green shading indicates increased quantile bounds scaled by the learned epistemic uncertainty.  Blue line and shading is GP regression with 99\% bounds for comparison.}
\label{fig:epistemic_example}
\end{figure}

\section{Three Ways to Learn Tubes for Tube MPC}
\label{sec:3}
In this section we present three variations for applying our deep quantile regression approach to MPC problems, whose applicability may vary based on what components are available to the designer.  By leveraging the previously described theorems for ensuring accurate quantiles, monotonicity, and uncertainty of the tube width dynamics, we can guarantee recursive feasibility of these MPC schemes, while ensuring that the trajectory of the system $x_t$ remains within a safe set $x_t\in\mathcal{C}\subset\mathbb{R}^n$ with probability $\alpha$ at each timestep.  The three different approaches require different elements of the system to be known or given, and are summarized as: 
\begin{enumerate}
    \item Given a tracking control law $u=\pi(x,z)$ and reference trajectory dynamics $f_z$, construct an invariant tube with the reference trajectory at its center (Figure \ref{fig:tube}).
    \item Given a tracking control law $\pi$ and reference trajectory dynamics $f_z$, construct a model of the dynamics of the error $e = x - z$, then learn an invariant tube with $z + e$ as its center (Figure \ref{fig:tracking_tube_figure}).
    \item From data generated from any control law, random or otherwise, learn a reduced representation of the dynamics $f_z$ (and optionally, a policy $\pi$ to track it), along with tube bounds on the tracking error (Figure \ref{fig:learn_model_error}).
\end{enumerate}

\subsection{Learning Tube Dynamics for a Given Controller}
We first consider the case where we are given a fixed ancillary controller $\pi(x,z):\mathbb{X}\times\mathbb{Z}\rightarrow\mathbb{U}$ (or potentially $\pi(x,z,v)$ with a feed-forward term $v$), along with nominal dynamics $f_z$ that are used for planning and tracking in the classic tube MPC manner \cite{Mayne2011}.  For now our goal is to learn $f_\omega$ alone.

We sum the three losses discussed in the previous section:
\begin{equation}
    L(\theta,C,\delta) = L_\omega^\alpha(\theta,\delta) + L_m(\theta,\delta) + L_u(C,\delta)
    \label{eq:total_loss}
\end{equation}
to learn $f^\theta_\omega$, and find $\theta^*$ and $C^*$ via stochastic gradient descent. Next, we perform planning on the coupled $z$ and tube dynamics in the following nonlinear MPC problem.  Let $T\in\mathbb{N}$ denote the planning horizon.  We use the subscript notation $v_{k|t}$ to denote the variable $v_k$ for $k=0,\cdots,T$ within the MPC problem at time $t$.  Let $v_{\cdot|t}$ denote the set of variables $\{v_{k|t}\}_{k=0}^T$.  Then, at time $t$, the MPC problem is:
\begin{subequations}
\label{eq:tube_mpc}
  \begin{align}
    \min_{v_{\cdot|t}\in\mathbb{V}} & J_T(v_{\cdot|t},z_{\cdot|t},\omega_{\cdot|t})\\
    s.t. \,\, \forall k=0,&\cdots,T: \nonumber\\
     z_{k+1|t}&=f_z(z_{k|t},v_{k|t}) \\
     \omega_{k+1|t}&=f_\omega^\theta(\omega_t,z_t,v_t,t)\\
    \omega_{0|t} &= d(x_{t},z_{0|t}) \label{eq:tube_mpc:initial}\\
    z_{T|t} &= f_z(z_{T|t},v_{T|t}) \label{eq:tube_mpc:terminal1}\\
    \omega_{T|t} & \geq f_\omega^\theta(\omega_{T|t},z_{T|t},v_{T|t},T)\label{eq:tube_mpc:terminal2}\\
    \Omega_{\omega_{k|t}}(z_{k|t})&\subseteq\mathcal{C}
  \end{align}
\end{subequations}
Let $v^*_{\cdot|t},z^*_{\cdot|t}$ denote the minimizer of the problem at time $t$.  Note that we include $\omega_{\cdot|t}$ in the cost, which allows us to encourage larger or smaller tube widths.  The tube width $\omega_{0|t}$ is updated based on a measurement $x_t$ from the system, or can also be updated with information from a state estimator.  In the absence of measurements we can also carry over the past optimized tube width, i.e. $\omega_{0|t}=\omega^*_{1|t-1}$, as long as $x_t\in\Omega_{\omega_{0|t}}(z_{0|t})$.  The closed-loop control is set to $v_t=v^*_{0|t}$ and the tracking target for the underlying policy is $z_{t+1}=z^*_{1|t}$.  Under these assumptions we have the following theorem establishing recursive feasibility and safety:

\begin{theorem}
\label{thm:mpc}
Suppose that the MPC problem (\ref{eq:tube_mpc}) is feasible at $t=0$.  Then the problem is feasible for all $t>0\in\mathbb{N}$ and at each timestep the constraints are satisfied with probability $\alpha$.
\end{theorem}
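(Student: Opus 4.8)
The plan is to prove this by induction on $t$ via the standard time-shifted candidate-solution argument, using Theorem~\ref{thm:bound_loss} and Theorem~\ref{thm:monotone} to handle the tube-width variables. The base case $t=0$ is the hypothesis. For the inductive step, suppose (\ref{eq:tube_mpc}) is feasible at time $t$ with minimizer $v^*_{\cdot|t},z^*_{\cdot|t}$ and associated $\omega^*_{\cdot|t}$, and recall the closed loop applies $v_t=v^*_{0|t}$ and sets $z_{t+1}=z^*_{1|t}$. I would build a candidate for the problem at $t+1$ by shifting, $v_{k|t+1}:=v^*_{k+1|t}$ and $z_{k|t+1}:=z^*_{k+1|t}$ for $k=0,\dots,T-1$, and appending the terminal element $v_{T|t+1}:=v^*_{T|t}$, $z_{T|t+1}:=z^*_{T|t}$. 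The $z$-dynamics constraints for $k<T-1$ hold by the shift; the link at $k=T-1$ and the terminal constraint (\ref{eq:tube_mpc:terminal1}) hold because $z^*_{T|t}$ is a fixed point of $f_z(\cdot,v^*_{T|t})$ by feasibility at $t$; and $z_{0|t+1}=z^*_{1|t}=z_{t+1}$ as required.

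The crux is the tube width. By (\ref{eq:tube_mpc:initial}) the candidate must take $\omega_{0|t+1}=d(x_{t+1},z_{0|t+1})=d(x_{t+1},z_{t+1})$. Since $\omega^*_{0|t}=d(x_t,z^*_{0|t})$ and $\omega^*_{1|t}=f_\omega^\theta(\omega^*_{0|t},z^*_{0|t},v^*_{0|t},t)$, Theorem~\ref{thm:bound_loss} gives that, with probability $\alpha$, $d(x_{t+1},z_{t+1})\le f_\omega^\theta(\omega^*_{0|t},z^*_{0|t},v^*_{0|t},t)=\omega^*_{1|t}$, i.e. $\omega_{0|t+1}\le\omega^*_{1|t}$ with probability $\alpha$. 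The remaining candidate widths are produced by iterating $f_\omega^\theta$ along the shifted $(z,v)$ sequence, which is exactly the shift of the $(z^*,v^*)$ sequence; so an induction on $k$ using the monotonicity of $f_\omega^\theta$ in $\omega$ from Theorem~\ref{thm:monotone} (equivalently, nestedness of the sets $\Omega_\omega$) yields $\omega_{k|t+1}\le\omega^*_{k+1|t}$ for all $k=0,\dots,T-1$. For the terminal width I would set $\omega_{T|t+1}:=\omega^*_{T|t}$; then $\omega_{T|t+1}\ge f_\omega^\theta(\omega^*_{T|t},z^*_{T|t},v^*_{T|t},T)\ge f_\omega^\theta(\omega_{T-1|t+1},z_{T|t+1},v_{T|t+1},T)$ using (\ref{eq:tube_mpc:terminal2}) at $t$, monotonicity, and $\omega_{T-1|t+1}\le\omega^*_{T|t}$, so both the recursion constraint into stage $T$ and the terminal inequality (\ref{eq:tube_mpc:terminal2}) at $t+1$ are satisfied.

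It then remains to check the safe-set constraints. For $k=0,\dots,T-1$ we have $z_{k|t+1}=z^*_{k+1|t}$ and $\omega_{k|t+1}\le\omega^*_{k+1|t}$, hence $\Omega_{\omega_{k|t+1}}(z_{k|t+1})\subseteq\Omega_{\omega^*_{k+1|t}}(z^*_{k+1|t})\subseteq\mathcal{C}$ by nestedness and feasibility at $t$; and $\Omega_{\omega_{T|t+1}}(z_{T|t+1})=\Omega_{\omega^*_{T|t}}(z^*_{T|t})\subseteq\mathcal{C}$. So the candidate is feasible at $t+1$ with probability $\alpha$, closing the induction. For per-step constraint satisfaction: at time $t$, $z_t=z^*_{1|t-1}$ and, by Theorem~\ref{thm:bound_loss} applied as above, with probability $\alpha$ the realized state satisfies $d(x_t,z_t)\le\omega^*_{1|t-1}$, i.e. $x_t\in\Omega_{\omega^*_{1|t-1}}(z^*_{1|t-1})\subseteq\mathcal{C}$ by feasibility at $t-1$; thus $x_t\in\mathcal{C}$ with probability $\alpha$.

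The main obstacle, and the point needing care, is the interplay between the \emph{probabilistic} quantile guarantee of Theorem~\ref{thm:bound_loss} and \emph{deterministic} recursive feasibility: $\omega_{0|t+1}\le\omega^*_{1|t}$ holds only on the probability-$\alpha$ event that $x_{t+1}$ lands in the previously planned tube, and off that event the large measured $\omega_{0|t+1}$ could render the program at $t+1$ infeasible. The clean resolution is to propagate feasibility deterministically by carrying over the planned width, $\omega_{0|t+1}=\omega^*_{1|t}$ (admissible exactly on that event, as noted after (\ref{eq:tube_mpc})), so that the induction above goes through verbatim while the chance constraint is enforced per timestep with probability $\alpha$. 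A secondary bookkeeping point is that the terminal conditions (\ref{eq:tube_mpc:terminal1})--(\ref{eq:tube_mpc:terminal2}) must be read as genuine (control-)invariance conditions so the appended terminal stage is admissible, and the width recursion understood as the over-approximation $\omega_{k+1|t}\ge f_\omega^\theta(\cdot)$ so that a tube smaller than predicted remains feasible; both are already implicit in the construction.
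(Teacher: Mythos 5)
Your proposal is correct and follows essentially the same route as the paper's (much terser) proof: a time-shifted candidate solution, Theorem~\ref{thm:bound_loss} to place the new initial tube inside the previously planned one with probability $\alpha$, Theorem~\ref{thm:monotone} to propagate the nestedness forward, and the terminal constraints (\ref{eq:tube_mpc:terminal1})--(\ref{eq:tube_mpc:terminal2}) for the appended stage. Your explicit handling of the probabilistic-versus-deterministic feasibility issue (carrying over $\omega_{0|t+1}=\omega^*_{1|t}$) matches the remedy the paper itself notes in passing, so there is no substantive divergence.
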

\begin{proof}
The proof is similar to that in \cite{kohler2019robust} for general set-based robust adaptive MPC.  Let $z_{0|t+1} = z^*_{1|t}$ and choose any $\omega_{0|t+1}$ such that $x_{t+1}\in\Omega_{\omega_{0|t+1}}(z_{0|t+1})$ (if measurements $x_{t+1}$ are unavailable, one can use $\omega_{0|t+1}=\omega^*_{1|t}$).  With probability $\alpha$, $\Omega_{\omega_{0|t+1}}(z_{0|t+1}) \subseteq \Omega_{\omega^*_{1|t}}(z^*_{1|t})$ due to Theorem \ref{thm:bound_loss}.   Let $v_{k|t+1}=v^*_{k+1|t}$ for $k=0,\cdots,T-1$, and let $v_{T|t+1}=v^*_{T|t}$.  Then $v_{\cdot|t+1}$ is a feasible solution for the MPC problem at $t=1$, due to the terminal constraints (\ref{eq:tube_mpc:terminal1},\ref{eq:tube_mpc:terminal2}) as well as the monotonicity of $f_\omega$ with respect to $\omega$ (Theorem \ref{thm:monotone}).  
\end{proof}

Since $f_\omega^\theta(\omega_t,z_t,v_t)$ is nonlinear we find solutions to the MPC problem via iterative linear approximations, yielding an SQP MPC approach \cite{diehl2009efficient,camacho2004inductive}.  Other optimization techniques are possible, including GPU-accelerated sampling-based ones \cite{williams2018information}.  We outline the entire procedure in Algorithm \ref{alg:tube_alg}.  

\begin{algorithm}[tb]
\small
\textbf{Require:} Ancillary policy $\pi$, Latent dynamics $f_z$,  Safe set $\mathcal{C}$, Quantile probability $\alpha$.  MPC horizon $T$.\\
\textbf{Initialize:} Neural network for tube dynamics $f_\omega^\theta$.  Dataset $\mathcal{D}=\{x_{t_i},u_{t_i},x_{t_i+1},z_{t_i},v_{t_i},z_{t_i+1},t_i\}_{i=1}^N$.  Initial states $x_0$, $z_0$, Initial feasible controls $v_{\cdot|0}$. \\
\For{$t=0,\cdots$}{
\If{updateModel}{
    Train $f_\omega^\theta$ on dataset $\mathcal{D}$ by minimizing tube dynamics loss (\ref{eq:total_loss}).
}
\If{$x_{t}$ measured}{
    Initialize tube width $\omega_{0|t}=d(x_t,z_t)$ 
}
Solve MPC problem (\ref{eq:tube_mpc}) with warm-start $v_{\cdot|t}$, obtain $v_t$, $z_{t+1}$\\
Apply control policy to system $u_t=\pi(x_t,z_{t+1})$\\
Step forward for next iteration: $v_{k|t+1}=v^*_{k+1|t},\ k=0,\cdots,T-1,\ v_{T|t+1}=v^*_{T|t},\ z_{0|t+1}=z^*_{1|t},\ \omega_{0|t+1}=\omega^*_{1|t}$\\
Append data to dataset $\mathcal{D}\leftarrow\mathcal{D}\cup\{x_t,u_t,x_{t+1},z_t,v_t,z_{t+1},t\}$
}
\caption{Tube Learning for Tube MPC}
\label{alg:tube_alg}
\end{algorithm}


\subsection{Learning Tracking Error Dynamics and Tube Dynamics}
Next we show how to learn error dynamics $e_{t+1}=f_e(e_t,z_t,v_t)$ along with a tube centered along these dynamics, where  $e_t = P_\mathbb{Z}(x) - z$ is the error between $x$ and $z$, with $x$ projected onto $\mathbb{Z}$.  These error dynamics function as the mean of the distribution of dynamics $x_{t+1}=f(x_t,u_t)$ when the tracking policy is used $u_t=\pi(x_t,z_{t+1},v_t)$.  This allows the tube to take on a more accurately parameterized shape (Figure \ref{fig:tracking_tube_figure}).  Setting up the learning problem in this way offers several distinct advantages.  First, rather than relying on an accurate nominal model $f_z$ and learning the bounds between this model and the true dynamics, we directly characterize the difference between the two models with $f_e$.  This means that $f_z$ can be chosen more arbitrarily and does not need to be a high-fidelity dynamics model.  Second, using the nominal dynamics $z_t$ as an input to $f_e$ and learning the error "anchors" our prediction of the behavior of $x_t$ to $z_t$.  This allows us to predict the expected distribution of $x_t$ with much higher accuracy for long time horizons, in contrast to the approach of learning a model $f$ directly and propagating it forward in time, where the error between the learned model and the true dynamics tends to increase with time.

\begin{figure}[tb]
    \centering
    \subfloat[Learning Tracking Error\label{fig:tracking_tube_figure}]{{\includegraphics[trim=0.1cm 11cm 16.5cm 0.1cm, clip,width=0.59\linewidth]{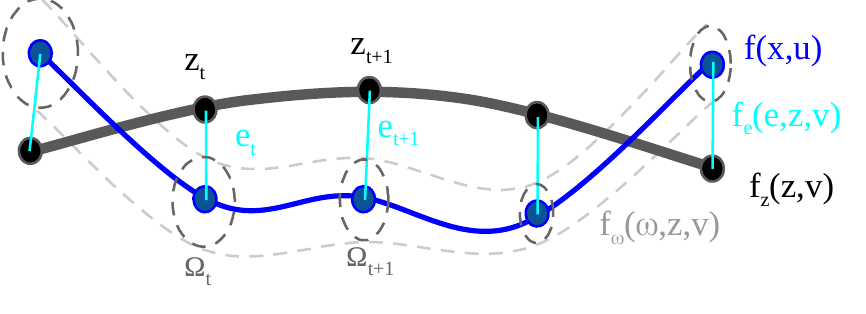}}}
    \vrule\ 
    \subfloat[Learning Model Error\label{fig:learn_model_error}]{{\includegraphics[trim=0cm 10.5cm 19cm 0.1cm, clip,width=0.39\linewidth]{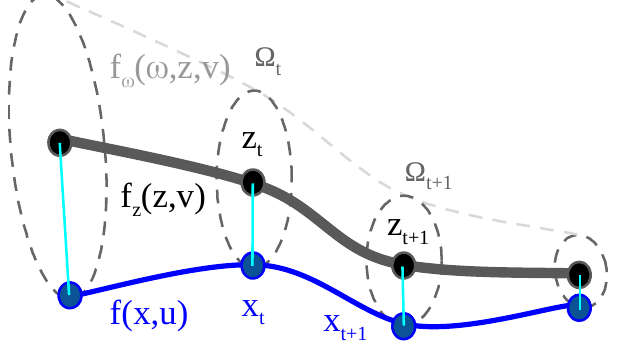}}}
    \caption{{(a) Learning error dynamics $f_e$ along with tube dynamics $f_\omega$.  Black line is the nominal trajectory $f_z$, blue line is data collected from the system.  Cyan indicates tracking errors, whose dynamics are learned.  Grey tube denotes $f_\omega$, which captures the error between the true dynamics and $z_t + e_t$.  (b) Fitting learned dynamics to actual data.  Blue inline indicates data collected from the system, black line is a learned dynamics trajectory fitted to the data.}}
\end{figure}

As before, we assume we have a known $\pi$ and nominal dynamics $f_z$.  Let $\Omega^e_\omega(z,e)\subset\mathbb{X}$ be a set in $\mathbb{X}$ associated with the tube width $\omega$,$z$, and $e$:
\begin{equation}
    \Omega^e_{\omega}(z,e):=\{x\in\mathbb{X}: d(x,z+e) \leq \omega\}.
\end{equation}
where the $\leq$ is element-wise.  We have the following description of the error dynamics:
\begin{align}
e_{t+1} &= f_e(e_t,z_t,v_t)\\\nonumber
\omega_{t+1} &= f_\omega(\omega_t,z_t,v_t)\\\nonumber
P(|(z_t &+ e_t) + x_t| \leq \omega_t)\geq \alpha,\qquad \forall t\in\mathbb{N}
\end{align}
Given a dataset $\mathcal{D}=\{x_t,u_t,x_{t+1},z_t,v_t,z_{t+1},t\}_{t=0}^N$, we minimize the following loss over data samples $\delta=\{x_t,x_{t+1},z_t,z_{t+1},v_t\}$ in order to learn $f_e(e_t,z_t,v_t)$, which we parameterize with $\xi$:
\begin{equation}
    L_e(\xi,\delta) = \|f_e^\xi(P_\mathbb{Z}(x_t) - z_t,v_t) - P_\mathbb{Z}(x_{t+1}) - z_{t+1}\|_2
    \label{eq:tracking_loss}
\end{equation}
Next, we learn $f_\omega$ by minimizing the quantile loss (\ref{eq:total_loss}).  However, while in the previous section $\omega_t=d(x_t,z_t)$, here we approximate the tube width with $\omega_t=d(x_t,z_t+e_t)$.  We obtain $e_t$ by propagating the learned dynamics $f_e^\xi$ forward in time, given $z_t,v_t$.  Then we can solve a similar tube-based robust MPC problem (\ref{eq:tracking_tube_mpc}):
\begin{subequations}
\label{eq:tracking_tube_mpc}
  \begin{align}
    \min_{v_{\cdot|t}\in\mathbb{V}} & J_T(v_{\cdot|t},z_{\cdot|t}+e_{\cdot|t},\omega_{\cdot|t})\\
    s.t. \,\, \forall k=0,&\cdots,T: \nonumber\\
     z_{k+1|t}&=f_z(z_{k|t},v_{k|t}) \\
     e_{k+1|t}&=f_e^\xi(e_t,z_t,v_t)\\
     \omega_{k+1|t}&=f_\omega^\theta(e_t,z_t,v_t,t)\\
    \omega_{0|t} &= d(x_{t},z_{0|t} + e_{0|t}) \label{eq:tracking_tube_mpc:initial}\\
    z_{T|t} + e_{T|t} &= f_z(z_{T|t},v_{T|t}) + f_e^\xi(e_{T|t},z_{T|t},v_{T|t}) \label{eq:tracking_tube_mpc:terminal1}\\
    \omega_{T|t} & \geq f_\omega^\theta(\omega_{T|t},z_{T|t},v_{T|t},T)\label{eq:tracking_tube_mpc:terminal2}\\
    \Omega^e_{\omega_{k|t}}(z_{k|t})&\subseteq\mathcal{C}
  \end{align}
\end{subequations}

Notice that the cost and constraints are now a function of $z_t+e_t$ and do not depend on $z_t$ only.  This means that we are free to find paths $z_t$ for the tracking controller $\pi$ to track, which may violate constraints.  We maintain the same guarantees of feasibility and constraint satisfaction as in Theorem \ref{thm:mpc}.  Since the proof is similar we omit it for brevity.  See Algorithm \ref{alg:tracking_alg}.
\begin{theorem}
\label{thm:tracking_tube_mpc}
Suppose that the MPC problem (\ref{eq:tracking_tube_mpc}) is feasible at $t=0$.  Then the problem is feasible for all $t>0\in\mathbb{N}$ and at each timestep the constraints are satisfied with probability $\alpha$.
\end{theorem}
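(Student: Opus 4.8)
The plan is to replay the recursive-feasibility argument of Theorem~\ref{thm:mpc}, now carrying the learned error state $e$ alongside the nominal state $z$ and treating $z_t+e_t$ as the effective tube center. Assume~(\ref{eq:tracking_tube_mpc}) is feasible at time $t$ with minimizer $(v^*_{\cdot|t},z^*_{\cdot|t},e^*_{\cdot|t},\omega^*_{\cdot|t})$. I would exhibit a feasible candidate at $t+1$ by time-shifting: set $v_{k|t+1}=v^*_{k+1|t}$ for $k=0,\dots,T-1$ and $v_{T|t+1}=v^*_{T|t}$; take $z_{0|t+1}=z^*_{1|t}$, $e_{0|t+1}=e^*_{1|t}$ and propagate through $f_z$ and $f_e^\xi$, so that $z_{k|t+1}=z^*_{k+1|t}$ and $e_{k|t+1}=e^*_{k+1|t}$ for $k=0,\dots,T-1$. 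Since the cost $J_T$ and the safety constraint $\Omega^e_{\omega_{k|t}}(z_{k|t})\subseteq\mathcal{C}$ depend on $z$ and $e$ only through the combined center $z+e$, and since the terminal constraint~(\ref{eq:tracking_tube_mpc:terminal1}) pins $z_{T|t}+e_{T|t}$ as a fixed point of $f_z+f_e^\xi$ under $v^*_{T|t}$, the shifted candidate again satisfies~(\ref{eq:tracking_tube_mpc:terminal1}) with the same combined terminal center.

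The crux is the tube-width chain. Choose $\omega_{0|t+1}$ with $x_{t+1}\in\Omega^e_{\omega_{0|t+1}}(z_{0|t+1},e_{0|t+1})$, for instance $\omega_{0|t+1}=d(x_{t+1},z_{0|t+1}+e_{0|t+1})$, or $\omega_{0|t+1}=\omega^*_{1|t}$ if $x_{t+1}$ is not measured. By Theorem~\ref{thm:bound_loss}, with probability $\alpha$ the realized width is dominated by the predicted one, $\omega_{0|t+1}\le\omega^*_{1|t}$, i.e.\ $\Omega^e_{\omega_{0|t+1}}(z_{0|t+1},e_{0|t+1})\subseteq\Omega^e_{\omega^*_{1|t}}(z^*_{1|t},e^*_{1|t})$. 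I would then induct on $k$ with Theorem~\ref{thm:monotone}: $f_\omega^\theta$ is nondecreasing in $\omega$, and the remaining inputs for the candidate coincide with those of the shifted optimal trajectory, so $\omega_{k|t+1}\le\omega^*_{k+1|t}$ propagates for $k=0,\dots,T-1$, giving $\Omega^e_{\omega_{k|t+1}}(z_{k|t+1},e_{k|t+1})\subseteq\Omega^e_{\omega^*_{k+1|t}}(z^*_{k+1|t},e^*_{k+1|t})\subseteq\mathcal{C}$. The terminal constraints~(\ref{eq:tracking_tube_mpc:terminal1})--(\ref{eq:tracking_tube_mpc:terminal2}) together with monotonicity then keep the shifted candidate terminal-feasible exactly as in the proof of Theorem~\ref{thm:mpc}. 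This gives (probabilistic) feasibility at $t+1$, and feasibility for all $t>0$ follows by induction from feasibility at $t=0$; constraint satisfaction at each step is then immediate, since $x_t\in\Omega^e_{\omega_{0|t}}(z_{0|t},e_{0|t})$ by the choice of $\omega_{0|t}$ and Theorem~\ref{thm:bound_loss} is precisely what makes the one-step inclusion $x_t\in\mathcal{C}$ hold with probability $\alpha$.

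The main obstacle I anticipate is bookkeeping around the error state and the terminal set. One must check that feeding the \emph{propagated} $e_{k|t}$ into $f_\omega^\theta$ (rather than an instantaneous measured tracking error) does not break the monotonicity chain — this is fine because Theorem~\ref{thm:monotone} only requires monotonicity in the $\omega$ argument and treats all other inputs as exogenous — and that the terminal conditions, which here constrain only the combined center $z+e$ and the terminal width via a self-bounding inequality, remain strong enough for the shift to stay terminal-feasible. The latter works precisely because the cost and the safe set $\mathcal{C}$ are functions of $z+e$; if $f_\omega^\theta$ or $\mathcal{C}$ depended on $z$ and $e$ separately, one would additionally need $z_{T|t}$ and $e_{T|t}$ to each be fixed points, strengthening~(\ref{eq:tracking_tube_mpc:terminal1}). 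Modulo this bookkeeping, the structure of the Theorem~\ref{thm:mpc} proof transfers without change.
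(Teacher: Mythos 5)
Your proposal is correct and follows exactly the route the paper intends: the paper omits this proof, stating only that it is ``similar'' to that of Theorem~\ref{thm:mpc}, and your shifted-candidate argument --- reusing Theorem~\ref{thm:bound_loss} for the probability-$\alpha$ tube inclusion and Theorem~\ref{thm:monotone} for the width chain, with the error state $e$ carried along and $z+e$ treated as the tube center --- is precisely that transferred argument. Your closing remark about the terminal constraint pinning only the combined center $z+e$ is a fair observation, but it is bookkeeping the paper itself glosses over, so no divergence from the intended proof.
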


\begin{algorithm}[tb]
\small
\textbf{Require:} Ancillary policy $\pi$, Latent dynamics $f_z$,  Safe set $\mathcal{C}$, Quantile probability $\alpha$.  MPC horizon $T$.\\
\textbf{Initialize:} Neural network for error dynamics $f_e^\xi$. Neural network for tube dynamics $f_\omega^\theta$.  Dataset $\mathcal{D}=\{x_{t_i},u_{t_i},x_{t_i+1},z_{t_i},v_{t_i},z_{t_i+1},t_i\}_{i=1}^N$.  Initial states $x_0$, $z_0$, $e_0$, Initial feasible controls $v_{\cdot|0}$. \\
\For{$t=0,\cdots$}{
\If{updateModels}{
    Train $f_e^\xi$ on dataset $\mathcal{D}$ by minimizing error dynamics loss (\ref{eq:tracking_loss}).\\
    Forward propagate learned model $f_x^\xi$ on dataset $\mathcal{D}$ to obtain $\{e_{t_i}\}_{t=1}^N$.  Append to $\mathcal{D}$.\\
    Train $f_\omega^\theta$ on dataset $\mathcal{D}$ by minimizing tube dynamics loss (\ref{eq:total_loss}), but replace $\omega_{t_i}=d(x_{t_i},x_{t_i}+e_{t_i})$.\\
}
\If{$x_{t}$ measured}{
    Initialize tube width $\omega_{0|t}=d(x_t,z_t+e_t)$
}
Solve MPC problem (\ref{eq:tracking_tube_mpc}) with warm-start $v_{\cdot|t}$, obtain $v_t$, $z_{t+1}$\\
Apply control policy to system $u_t=\pi(x_t,z_{t+1},v_t)$\\
Step forward for next iteration: $v_{k|t+1}=v^*_{k+1|t},\ k=0,\cdots,T-1,\ v_{T|t+1}=v^*_{T|t},\ z_{0|t+1}=z^*_{1|t},\ e_{0|t+1}=e^*_{1|t},\  \omega_{0|t+1}=\omega^*_{1|t}$\\
Append data to dataset $\mathcal{D}\leftarrow\mathcal{D}\cup\{x_t,u_t,x_{t+1},z_t,v_t,z_{t+1},t\}$
}
\caption{Learning Tracking Error Dynamics and Tube Dynamics for Tube MPC}
\label{alg:tracking_alg}
\end{algorithm}

\subsection{Learning System Dynamics and Tube Dynamics}
In our third approach to learning tubes, we wish to learn the dynamics directly without a prior nominal model $f_z$.  We restrict $\mathbb{Z}=\mathbb{X}$ and $\mathbb{V}=\mathbb{U}$, and treat $z$ as an approximation of $x$.  Our goal is to learn $f_z$ to approximate $f$, along with $f_\omega$ that will determine a time-varying upper bound on the model error.  Typically the open-loop model error will increase in time in an unbounded manner, which may make it difficult to find a feasible solution to the MPC problem.  One approach is to assume the existence of a stabilizing controller and terminal set, and use a terminal condition that ensures the trajectory ends in this set \cite{kerrigan2001robust,Koller2018}.  A second approach is to find a feedback control law $\pi$ to ensure bounded tube widths.  We describe the latter approach in more detail, but do not restrict ourselves to it.

Using a standard L2 loss function, we first learn an approximation of $f$, call it $f_z^\phi$ with parameters $\phi$:
\begin{equation}
    L_f(\phi,\delta) =  \|f_z^\phi(x_{t},u_{t}) - x_{t+1}\|_2
    \label{eq:loss_dyn}
\end{equation}
Next, we learn a policy $\pi^\psi$ with parameters $\psi$ by inverting the dynamics:
\begin{equation}
    L_\pi(\psi,\delta) = \|\pi^\psi(x_{t},x_{t+1}) - u_{t}\|_2
    \label{eq:loss_policy}
\end{equation}
By learning a policy in this manner we decouple the potentially inaccurate model $f_z^\phi(x_t,u_t)$ from the true dynamics, in a learning inverse dynamics fashion \cite{nguyen2008learning}.  To see this, suppose we have some $z_t$ and $v_t$, and $z_{t+1}=f_z^\phi(z_t,v_t)$.  If $x_t\neq z_t$ and we apply $v_t$ to the real system, $x_{t+1}=f(x_t,v_t)$, then the error $\|x_{t+1}-z_{t+1}\|$ will grow, i.e. $\|x_{t}-z_{t}\| \leq \|x_{t+1}-z_{t+1}\|$.  However, if instead we use the policy $\pi^\psi$, then $f(x_t,\pi^\psi(x_t,z_{t+1}))$ should be closer to $z_{t+1}$, and the error is more likely to shrink.  Other approaches are available for learning $\pi$, including reinforcement learning \cite{sutton2000policy}, imitation learning \cite{ross2011reduction}, etc.
Finally, we learn $f_\omega$ in the same manner as before by minimizing the quantile loss in (\ref{eq:total_loss}).  We generate data for learning the tube dynamics by fitting trajectories of the learned model $f_z^\phi$ to closely approximate the real data $x_t$ (Figure \ref{fig:learn_model_error}).  We randomly initialize $z_{0|t}$ along the trajectory $x_t$ by letting $z_{0|t} = \mathcal{N}(x_{t},\sigma I)$.  We solve the following problem for each $t$:
\begin{subequations}
\label{eq:fit_mpc}
  \begin{align}
    \min_{v_{\cdot|t}\in\mathbb{V}} & \sum_{k=1}^T \|z_{k|t}-x_{t+k}\|\\
    s.t. \quad z_{k+1|t}&=f_z^\phi(z_{k|t},v_{k|t}), \quad \forall k=0,\cdots,T-1
  \end{align}
\end{subequations}
From the fitted dynamics model data, we collect tube training data $\mathcal{D}_z=\bigcup_{t}\Big[\{x_{t+k},x_{t+k+1},z_{k|t},v_{k|t},z_{k+1|t}\}_{k=0}^{T}\Big]$ and proceed to train the tube model.  
We can now solve the same tube-based robust MPC problem (\ref{eq:tube_mpc}), with $f_z$ replaced with $f_z^\phi$.  This allows us to maintain the same guarantees of feasibility and safety with probability $\alpha$ as before.  See Algorithm \ref{alg:dynamics_alg}.
\begin{algorithm}[tb]
\small
\textbf{Require:} Safe set $\mathcal{C}$, Quantile probability $\alpha$.  MPC horizon $T$.\\
\textbf{Initialize:} Neural network for policy $\pi^\psi$, dynamics $f_z^\phi$, and tube dynamics $f_\omega^\theta$.  Dataset $\mathcal{D}=\{x_{t_i},u_{t_i},x_{t_i+1}\}_{i=1}^N$.  Initial state $x_0$.\\
Solve MPC problem (\ref{eq:tube_mpc}) for initial feasible control sequence $v_{\cdot|0}$.\\
\For{$t=0,\cdots$}{
\If{updateModel}
{
    Train $f_z^\phi$ on dataset $\mathcal{D}$ by minimizing dynamics loss (\ref{eq:loss_dyn}).\\
    Train $\pi^\psi$ on dataset $\mathcal{D}$ by minimizing policy loss (\ref{eq:loss_policy}).\\
    Create $\mathcal{D}_z=\bigcup_{t}\Big[\{x_{t+k},x_{t+k+1},z_{k|t},v_{k|t},z_{k+1|t}\}_{k=0}^{T}\Big]$ by solving (\ref{eq:fit_mpc}). \\
    Train $f_\omega^\theta$ on dataset $\mathcal{D}_z$ by minimizing tube dynamics loss (\ref{eq:total_loss}).
}
\If{$x_{t}$ measured}{
    Initialize tube width $\omega_{0|t}=d(x_t,z_t)$ 
}
Solve MPC problem (\ref{eq:tube_mpc}) with warm-start $v_{\cdot|t}$, obtain $v_t$, $z_{t+1}$\\
Apply control policy to system $u_t=\pi^\psi(x_t,z_{t+1})$\\
Step forward for next iteration: $v_{k|t+1}=v^*_{k+1|t},\ k=0,\cdots,T-1,\ v_{T|t+1}=v^*_{T|t},\ z_{0|t+1}=z^*_{1|t},\ \omega_{0|t+1}=\omega^*_{1|t}$\\
Append data to dataset $\mathcal{D}\leftarrow\mathcal{D}\cup\{x_t,u_t,x_{t+1}\}$
}
\caption{Learning Dynamics and Model Error Bounds for Tube MPC}
\label{alg:dynamics_alg}
\end{algorithm}
\section{Experimental Details}
\label{sec:4}
\subsection{Evaluation on a 6-D problem}
In this section we validate each of our three approaches to learned tubes for tube MPC on a 6-state simulated triple-integrator system.  We introduce two sets of dynamics for $f$ and $f_z$ to demonstrate our method.  Consider the following 2D triple-integrator system with 6 states, where $x=[p_x,p_y,v_x,v_y,a_x,a_y]^\intercal$, along with the 4 state 2D double-integrator dynamics for the reference system: $z=[p^z_x,p^z_y,v^z_x,v^z_y]$.  Let these systems have the following dynamics (we show the $x$-axis only for brevity sake):
\begin{align}
    \frac{d}{dt}\begin{bmatrix}p_x\\v_x\\a_x\end{bmatrix}
                &=\begin{bmatrix}0 & 1 & 0 \\
                     0 & 0 & 1 \\
                     0 & 0 & -k_f\end{bmatrix}\begin{bmatrix}p_x\\v_x\\a_x\end{bmatrix}  + \begin{bmatrix}0\\0\\1\end{bmatrix} u_x + \begin{bmatrix}0 & 0\\1 & 0\\0 & 1\end{bmatrix} w\\
    \frac{d}{dt}\begin{bmatrix}p^z_x\\v^z_x\end{bmatrix}&=\begin{bmatrix}0 & 1 \\
                     0 & -k_f^z\end{bmatrix}\begin{bmatrix}p^z_x\\v^z_x\end{bmatrix}  + \begin{bmatrix}0\\1\end{bmatrix} v_x
\label{eq:triple_integrator}
\end{align}
where $w\sim\mathcal{N}(0,\epsilon I_{2\times2})$, and with similar dynamics for the $y$-axis.  We construct the following cascaded PD control law:
\begin{align}
    \pi_x(p_x,p^z_x) &= k_d(k_p(p^z_x-p_x)-v_x + v^z_x) + k_a(-a_x)
\end{align}
We choose $k_f=0.1,k_f^z=1.0,k_p=1,k_d=10,k_a=5$, and $\epsilon=0.05$.  We also bound $\|v_x\|,\|v_y\|\leq 1$.  We simulate in discrete time with $dt=0.1$.

We collect $\sim$100 episodes with randomly generated controls, with episode lengths of $\sim$100 steps.  Following each algorithm, we then set up an MPC task to navigate through a forest of obstacles (see Figure \ref{fig:mpc_comparison}).  We found an MPC planning horizon of 20-30 steps to be effective.  We ran each MPC algorithm for 100 steps, or until the system reaches the goal.  We also plot 100 rollouts of the "true" system $x_t$ to evaluate the learned bounds.  For each learned network, we use 3 layers with 256 units each.  When calculating constraints for the tube, we treat the tube width $\omega_t$ as axes for an ellipse rather than a box. This alleviates the need for solving a mixed integer quadratic program, at the cost of a slightly larger tube.  We use a quadratic running cost that penalizes deviation from the goal and excessively large velocities.

With Algorithm \ref{alg:tube_alg}, we note that the tube widths are quite large.  This is because this algorithm uses the reference trajectory itself as the center of the tube.  While the tube encloses the trajectories, it does not create a tight bound.  In Algorithm \ref{alg:tracking_alg}, we address this issue directly.  We learn dynamics of the mean tracking error and use this as our tube center.  The resulting tube dynamics bound the state distribution more closely.  Note that when solving the MPC problem, the optimized reference trajectory $z_t$ is free to violate the constraints, as long as the system trajectories $x_t$ do not.  This approach allows for much more aggressive behaviors.  For Algorithm 3, without a good tracking controller, the tube width increases over time.  However, because we replan at each timestep with a finite horizon, the planner is still able to fit through narrow passages.  In the example shown we replan from the current state $x_t$, with the assumption that it is measured.  This allows us to create aggressive trajectories with narrow tube widths.
\begin{figure}[tb]%
    \centering
    \includegraphics[width=.9\linewidth,trim={0 25 0 15},clip]{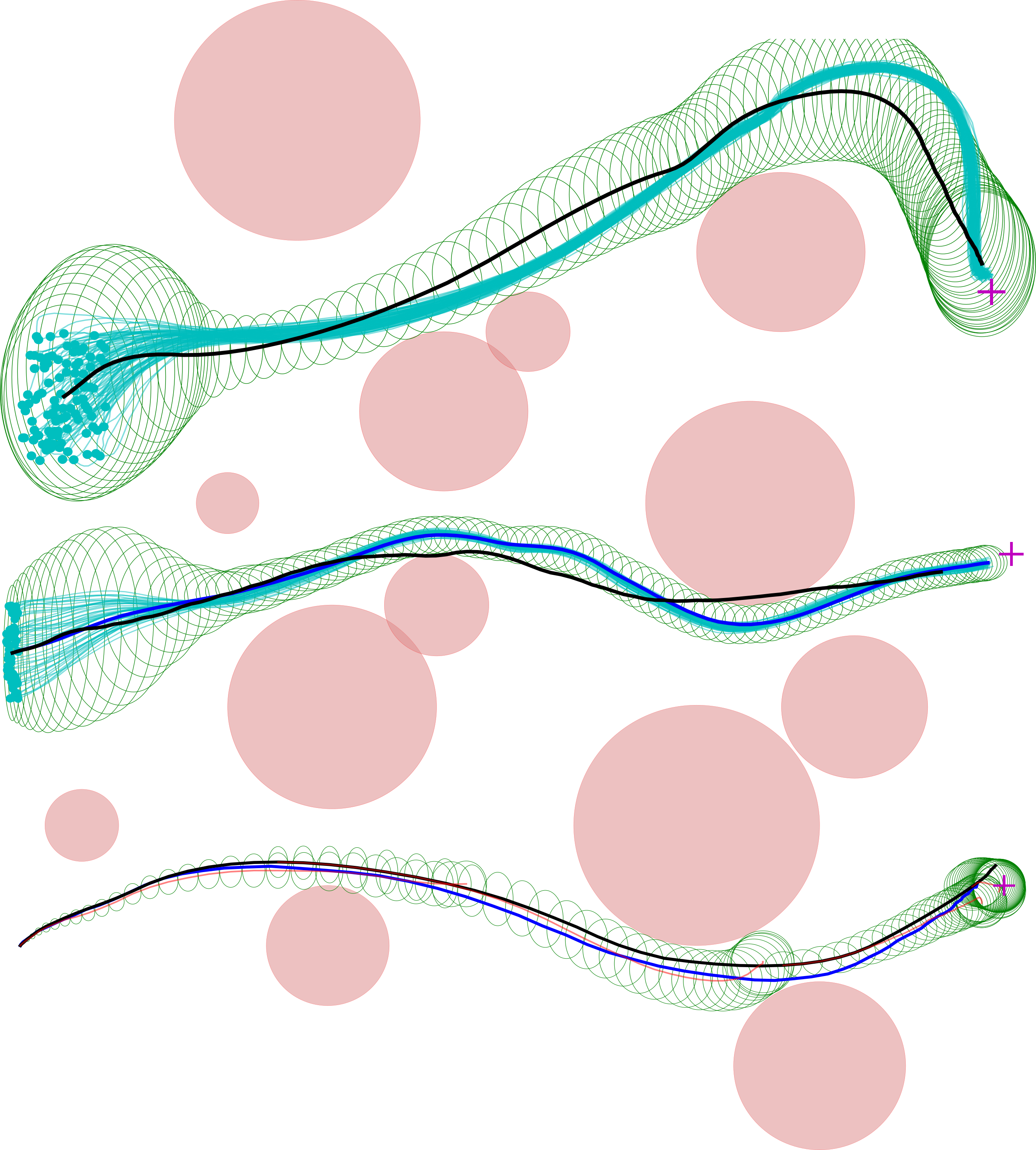}
    \caption{{Comparison of 3 tube MPC approaches with learned tubes.  Red circles denote obstacles, magenta cross denotes goal.  Cyan lines indicate sampled trajectories from the system $x_t$ with randomized initial conditions.    Top: Algorithm 1, learning a tube around the reference $z$ (black) used for tracking.  Green circles indicate the tube width obtained at each timestep.  Mid: Algorithm 2, learning tracking error dynamics (blue line) for the center of the tube.  Bot: Algorithm 3, tube MPC problem using learned policy, dynamics, and tube dynamics.  Red lines indicate planned NN dynamics trajectories at each MPC timestep, along with the forward propagated tube dynamics (green), shown every 20 timesteps.  Blue line indicates actual path taken ($x_t$).}}%
    \label{fig:mpc_comparison}
\end{figure}
\subsection{Comparison with analytic bounds}
We compare our learned tubes with an analytic solution for robust bounds on the system (\ref{eq:triple_integrator}).  We derive these analytic bounds by assuming worst-case noise perturbations of the closed-loop system.  We find the bound $W$ such that $P(|w_t| \leq W)\geq \alpha$ (with $\alpha=0.95$).  The worst-case error at each timestep is $w_t=\pm W$.  We compare these bounds with those learned with our quantile method (Figure \ref{fig:bound_comparison}).  Our method tends to underestimate the true bounds slightly, which is due to the training data rarely containing worst-case adversarial noise sequences.  

\begin{figure}[tb]
\centering
\subfloat{
    \includegraphics[width=0.6\linewidth,trim={15 30 15 0},clip]{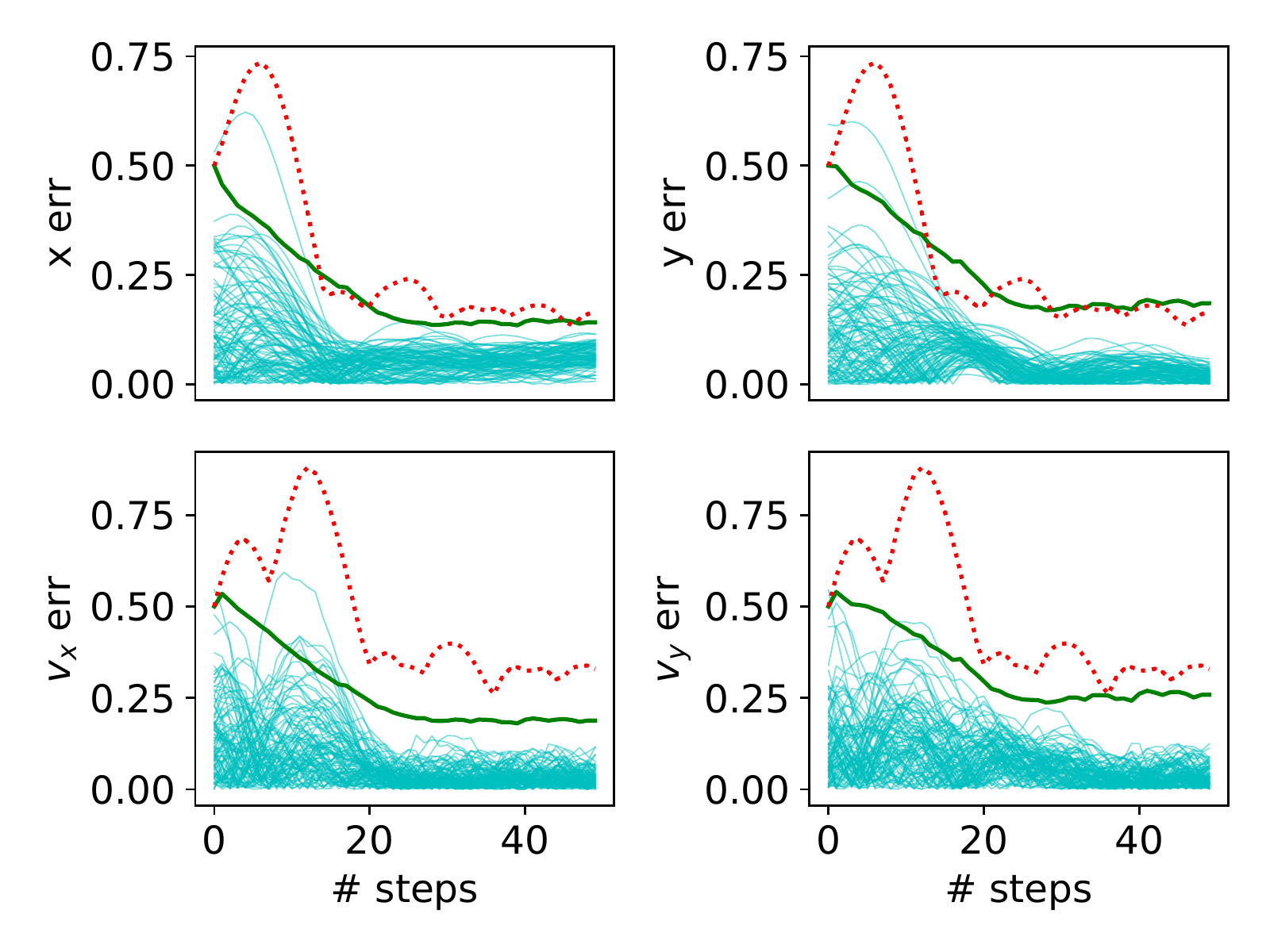}
}
\caption{Learned $95\%$ quantile error bounds (green) vs. $95\%$ analytic bounds (dotted red) for the linear triple-integrator system, with 100 sampled trajectories, tracking a random reference trajectory.}
\label{fig:bound_comparison}
\end{figure}

\subsection{Ablative Study}
We perform an ablative study of our tube learning method.  Using Algorithm \ref{alg:tube_alg}, we learn error dynamics and tube dynamics.  We collect randomized data (400 episodes of 40 timesteps) and train $f_\omega$ under varying values of $\alpha$.  We then evaluate the accuracy of $f_\omega$ by sampling 100 new episodes of 10 timesteps, and plot the frequency that $f_\omega$ overestimates the true error, along with the magnitude of overestimation (Figure \ref{fig:alpha}, left).  We compare networks learned with the epistemic loss and without it, and find that our method produces well-calibrated uncertainties when using the epistemic loss, along with the quantile and monotonic losses (\ref{eq:total_loss}).  We evaluated ablation of the monotonic loss but found no noticeable differences.

We also evaluate estimation of epistemic uncertainty with varying amounts of data (from 10 to 400 episodes), with a fixed value of $\alpha=0.95$.  We find that estimating epistemic uncertainty is particularly helpful in the low-data regimes (Figure \ref{fig:alpha}, right).  As expected, the network maintains good quantile estimates by increasing the value of $f_\omega$, which results in larger tubes.  This creates more conservative behavior when the model encounters new situations.
\begin{figure}[tb]
\centering
\includegraphics[width=\linewidth]{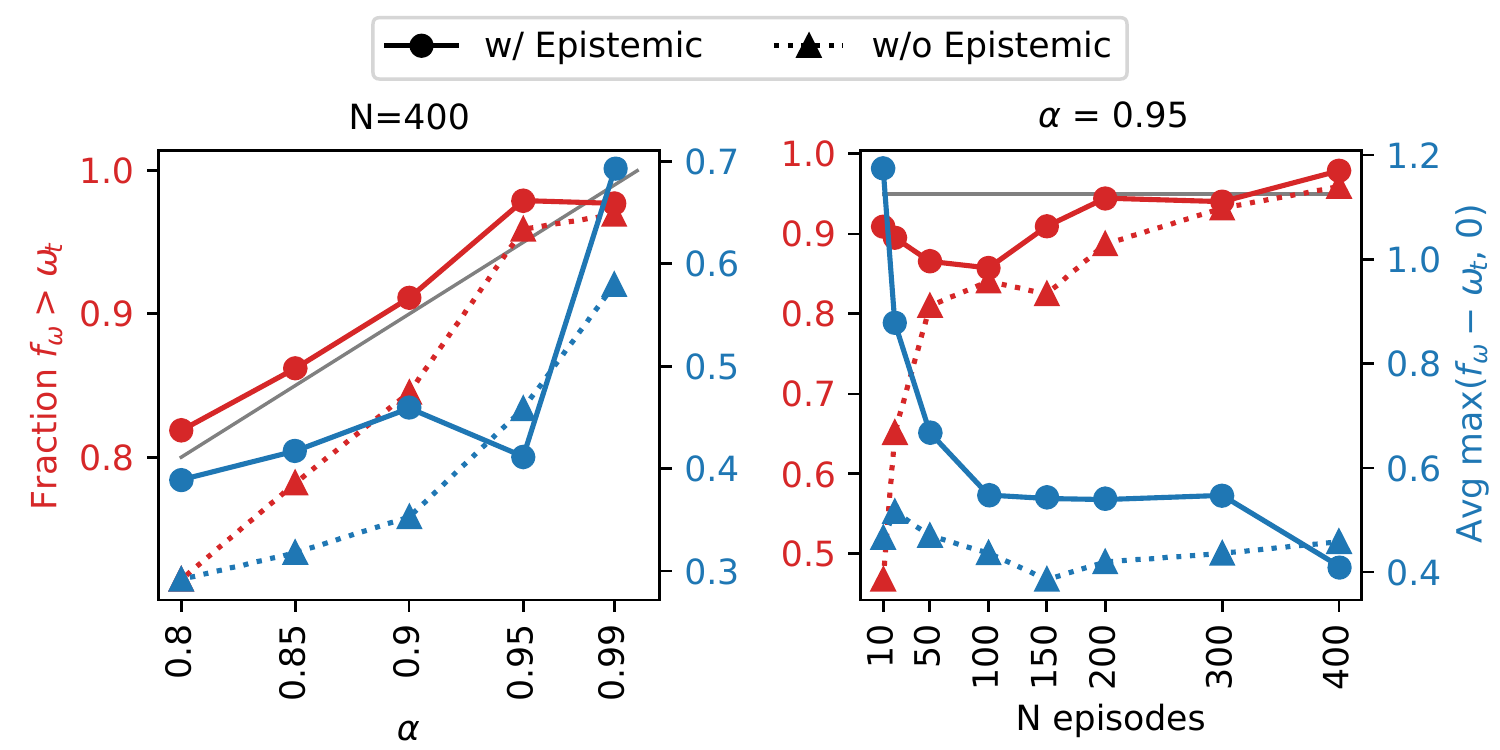}
\caption{Evaluation of learned tube dynamics $f_\omega$ on triple integrator system with varying $\alpha$ (left) and varying number of datapoints (right).  Red indicates fraction of validation samples that exceed the bound, while blue indicates average distance in excess of the bound.  Models learned with the epistemic loss along with the quantile loss (circles, solid lines) perform better vs. models without epistemic uncertainty (triangles, dotted lines).  Gray lines mark the best possible values.}
\label{fig:alpha}
\end{figure}
\subsection{Evaluation on Quadrotor Dynamics}
To validate our approach scales well to high-dimensional non-linear systems, we apply Algorithm \ref{alg:tracking_alg} to a 12 state, 4 input quadrotor model, with dynamics:
\begin{align*}
    \dot{\mathbf{x}} &= \mathbf{v} &  m\dot{\mathbf{v}}&=mg\mathbf{e}_3-T R\mathbf{e}_3\\
    \dot{R} &= R\hat{\Omega} &  J\dot{\Omega} &= M + w - \Omega \times J\Omega
\end{align*}
where $\hat{\cdot}:\mathbb{R}^3\rightarrow SO(3)$ is the hat operator.  The states are the position $\mathbf{x}\in\mathbb{R}^3$, the translational velocity $\mathbf{v}\in\mathbb{R}^3$, the rotation matrix from body to inertial frame $R\in SO(3)$, and the angular velocity in the body frame $\Omega\in\mathbb{R}^3$.  $m\in\mathbb{R}$ is the mass of the quadrotor, $g\in\mathbb{R}$ denotes gravitational force, and $J\in\mathbb{R}^{3\times3}$ is the inertia matrix in body frame.  The inputs to the model are the total thrust $T\in\mathbb{R}$ and the total moment in the body frame $M\in\mathbb{R}^3$.  Noise enters through the control channels, with $w\sim\mathcal{N}(0,\epsilon I_{3\times3})$.  Our state is $x_t=\{\mathbf{x},\mathbf{v},R,\Omega\}\in\mathbb{R}^{18}$ and control input is $u_t=\{T,M\}\in\mathbb{R}^4$.  We use a nonlinear geometric tracking controller that consists of a PD controller on position and velocity, which then cascades to an attitude controller \cite{lee2010geometric}.  For the nominal model $f_z$ we use a double integrator system on each position axis.  The nominal state is $z_t=\{\mathbf{x},\mathbf{v}\}\in\mathbb{R}^6$ with acceleration control inputs $v_t=\{a_x,a_y,a_z\}\in\mathbb{R}^3$.  See Figures \ref{fig:quad} and \ref{fig:quad_detail}.
\begin{figure}[tb]
\centering
\subfloat{
    \includegraphics[width=0.9\linewidth]{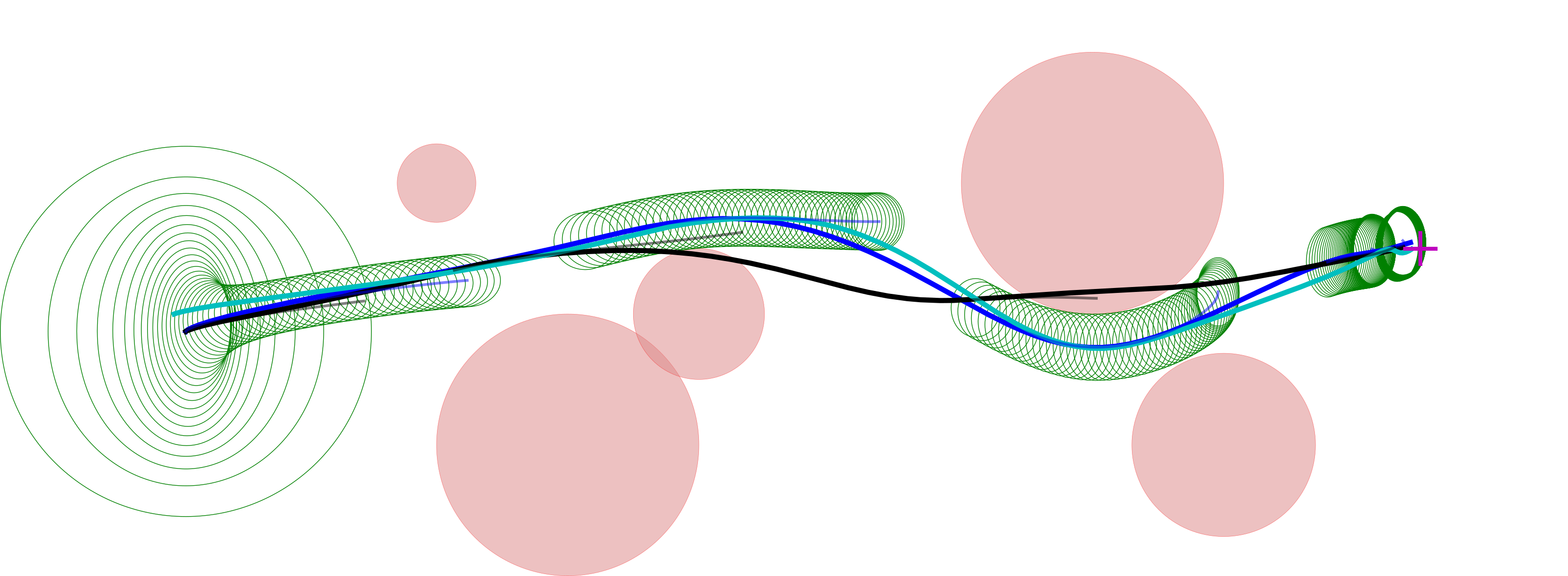}
}
\caption{Algorithm \ref{alg:tracking_alg} working on quadrotor dynamics, showing 5 individual MPC solutions at different times along the path taken.  Thinner lines (black and blue) indicate planned future trajectories $z_{\cdot|t}$ and $e_{\cdot|t}$, respectively.}
\label{fig:quad}
\end{figure}
\begin{figure}[tb]
\centering
\includegraphics[trim=1cm 0.5cm 1cm 1cm, clip,width=0.8\linewidth]{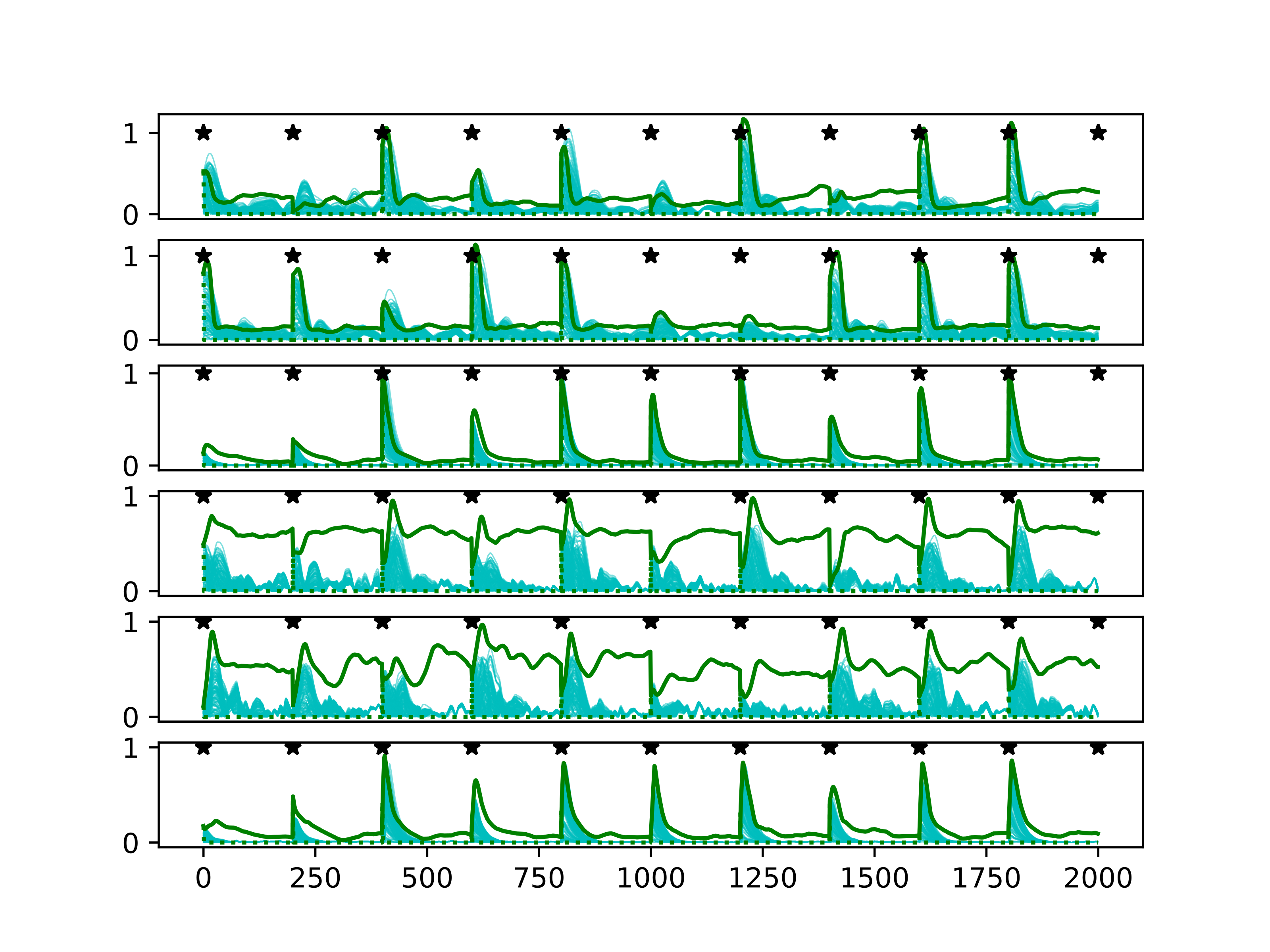}
\caption{Tube widths $f_\omega$ for quadrotor dynamics, 10 episodes of 200 timesteps each, tracking random reference trajectories. From top to bottom, we plot $(p_x,p_y,p_z,v_x,v_y,v_z)$.  Green lines indicate the quantile bound $\omega_t$, with $\alpha=0.9$, and cyan lines show 100 sampled error trajectories, $|x_t-e_t|$.  Black stars indicate the start of a new episode.}
\label{fig:quad_detail}
\end{figure}
\section{Conclusion}
\label{sec:5}
We have introduced a deep quantile regression framework for learning bounds on controlled distributions of trajectories.  For the first time we combine deep quantile regression in three robust MPC schemes with recursive feasibility and constraint satisfaction guarantees.  We show that these schemes are useful for high dimensional learning-based control on quadrotor dynamics.  We hope this work paves the way for more detailed investigation into a variety of topics, including deep quantile regression, learning invariant sets for control, handling epistemic uncertainty, and learning-based control for non-holonomic or non-feedback linearizable systems.  Our immediate future work will involve hardware implementation and evaluation of these algorithms on a variety of systems.
\section*{Acknowledgement}
We thank Brett Lopez and Rohan Thakker for insightful discussions and suggestions, as well as the reviewers for helpful comments.  This research was partially carried out at the Jet Propulsion Laboratory (JPL), California Institute of Technology, and was sponsored by the JPL Year Round Internship Program and the National Aeronautics and Space Administration (NASA).  Copyright \textcopyright 2020. All rights reserved.
\bibliographystyle{plainnat}
\bibliography{main}

\end{document}